\newtheorem{assumption}[theorem]{Assumption}
\newcommand{\dataset}{{\cal D}}
\newcommand{\W}{{\cal W}}
\providecommand{\prt}[1]{\left( #1 \right)}
\providecommand{\R}{\mathbb{R}}
\providecommand{\lessim}{\lesssim}
\newcommand{\ao}[1]{{#1}}
\newcommand{\modif}[1]{#1}
\newcommand{\aoo}[1]{#1}
\begin{document}

\title{Convex SGD: Generalization Without Early Stopping}

\author{\name Julien M. Hendrickx \email julien.hendrickx@uclouvain.be\\
       \addr ICTEAM Institute,        
       UCLouvain,\\
       Louvain-la-Neuve, B-1348, Belgium
       \AND
       \name Alex Olshevsky \email alexols@bu.edu \\
       \addr Department of Electrical and Computer Engineering \\
       Division of Systems Engineering \\ 
       Boston University, Boston, USA}

\editor{}

\maketitle

\begin{abstract}
We consider the generalization error associated with stochastic gradient descent on a smooth convex function over a compact set. We show the first bound on the generalization error that vanishes when the number of iterations $T$ and the dataset size $n$ go to zero at \modif{arbitrary} rates; our bound scales as $\tilde{O}(1/\sqrt{T} + 1/\sqrt{n})$ with step-size  $\alpha_t = 1/\sqrt{t}$.  In particular, strong convexity is not needed for stochastic gradient descent to generalize well. 
\end{abstract}

\begin{keywords}
  Stochastic Gradient Descent, Generalization 
\end{keywords}

\section{Introduction}

Stochastic Gradient Descent (SGD) is ubiqutious in machine learning and it is  of considerable interest to understand  its  generalization abilities,  i.e.,  to quantify the performance of models trained with SGD to handle unseen data.  There is a recent line of literature, initiated in \cite{hardt2016train} that suggests SGD has a limited ability to overfit, regardless of the number of parameters in the model. This observation appears to match the commonly-observed phenomenon of practitioners using variations on gradient descent to successfully train vastly over-parametrized models.

In this paper, we will consider the problem of using SGD to fit a smooth convex loss function to the data. In this case, SGD can be written in the form,
\[ w_{t+1} = w_t - \alpha_t \left( \nabla_{w} \ell(w_t; S) + \epsilon_t\right), \] where for any dataset $S$, $\epsilon_t$ is noise which is zero-mean  conditioned on on $w_0, \ldots, w_t$ and  $\ell(w_t;S)$ is the empirical risk suffered by the iterate $w_t$ on the data set. Formally, we assume there is a set of data points $S=\{z_1, \ldots, z_n\}$ generated i.i.d. by sampling from some distribution ${\mathcal D}$ and  there is some loss function $f(w;z)$ such that, 
\[ \ell(w;S) = \frac{1}{|S|} \sum_{z_i \in S} f(w; z_i). \]
The goal is to bound the generalization error, which is the expected loss of  $w_t$ (or a running average of $w_t$) on a new sample generated from the same distribution ${\mathcal D}$: 
\[ G(w_t) = E_{z \sim {\cal D}} [f(w_t;z)] - \ell (w_t,S). \] 

Under the assumption that the function $f(w,z)$ is convex and has gradients with Lipschitz constant $L$ that are further always bounded by $B$ in the Euclidean norm, it was shown in \cite{hardt2016train} that 
\begin{equation} \label{eq:rechtbound} G(\widehat{w}_t) \leq \frac{2B^2}{n} \sum_{t=1}^T \alpha_t,
\end{equation} where $\widehat{w}_t$ is a certain running average of the iterates. In particular, an implication of \modif{this result} is that if we run gradient descent for a fixed number of iterations, the generalization error is finite and scales as $\sim 1/n$. 

The main question we want to address in this paper is: {\em can we derive a bound in this setting that does not go to infinity with the number of iterations $T$?} Indeed, common step-sizes for SGD often scale as $ \sim 1/\sqrt{t}$ or $ \sim 1/t$, and in both of these cases the sum on the right-hand side of Eq. (\ref{eq:rechtbound}) is infinite. In general, attaining a finite generalization bound from Eq. (\ref{eq:rechtbound}) \modif{for a given $n$ and arbitrarily large $T$} means choosing a step-size schedule $\alpha_t$ whose sum is finite, but this typically rules out the possibility of attaining the same training error as the empirical minimum $\min_{w} l(w;S)$:  analyses of SGD and other optimization methods require the sum of the step-sizes to be infinite. 

There have been many follow-up works, discussed in the literature review below, that have used variations on Eq. (\ref{eq:rechtbound}) to get good generalization bounds; however, this required ``early stopping,'' i.e., choosing $T$ depending on problem parameters. 
We are motivated by the observation that practitioners often do not employ early stopping and so, to the extent that  generalization bounds for gradient descent might one day  explain the surprisingly good performance of SGD, it is useful to understand whether Eq. (\ref{eq:rechtbound}) might be replaced by something uniformly bounded over $T$.

\bigskip
{\bf The convexity assumption.} 
The kind of bounds we are looking for are available in the strongly convex case \modif{(on a bounded domain)}: it was  shown in \cite{hardt2016train} that
\begin{equation} \label{eq:rechtstrongly} G(w_t) \leq \frac{2 B^2}{\mu n}, 
\end{equation} independently of $t$ where $\mu$ is the strong convexity parameter. {\aoo Not only is this bound finite independently of how long one runs SGD for, but also it goes to zero as the dataset size $n$ goes to infinity}. However,  for the class of smooth convex functions without strong convexity, such a bound is unavailable. One reason to be particularly interested in the convex but not strongly convex case is that it comes up naturally in many scenarios throughout machine learning. 

Indeed, let us consider the simplest \modif{case} of using a linear model \modif{for classification}. In this scenario, we want to train a classifier on a collection of data points of the form $(x_i,y_i)$ where $y_i$ is a binary $\pm 1$ label. \modif{We take the simple} 
linear model $w^T x \approx y$; a convex loss is desirable, and we may attempt to use the strongly convex quadratic loss $(1/2) (w^T x - y)^2$. 

A potential problem with doing so is that we will eventually use $w^T x$ to classify a data point $x$ to $\pm 1$, and it makes no difference whether e.g., $w^T x =2$ or $w^T x = 3$, as all of these will be classified to $+1$. This may sometimes give rise to scenarios where loss (either test or train) is falling but the corresponding classification accuracy is getting worse. Indeed, a minimization algorithm with the above quadratic loss might find that updating $w$ to move various $x_i$ with $w^T x_i > 1$ closer to $w^T x_i \approx 1$ is better in terms of reducing the loss than updating $w$ in a way that improves classification accuracy. 

We may therefore attempt to strengthen the connection between loss and classification accuracy by introducing a one-sided loss, 
\[ f(w; (x_i,1)) = \begin{cases} 0 & w^T x \geq 1 \\ 
(1/2) (w^T x - 1)^2 & \mbox{ else} \end{cases} \] and likewise for $f(w; (x_i, -1))$. It  is also possible to consider a version where the breakpoint between two regimes occurs when $w^T x = 0$. Either way we now have a loss that is now convex and smooth but not  strongly convex. 


\medskip

\noindent {\bf Problem Statement.} Given the discussion above, we can now state our goal more formally. {\aoo In optimization theory, to get convergence of SGD to an optimal solution one usually takes a step-size $\alpha_k$ satisfying 
\begin{equation} \label{eq:optstepsize} \sum_{k=1}^{+\infty} \alpha_k = +\infty, ~~~ \frac{\sum_{k=1}^T \alpha_k^2}{\sum_{k=1}^T \alpha_k} \rightarrow 0. 
\end{equation} Using any such step-size,} we would like to derive a bound of the form 
\[ G ( \widehat{w}_T) \leq q(d, n, T), \] under the assumption that the loss is convex, with bounded and Lipchitz gradient, and where the bound $q(d,n,T)$  will not diverge to infinity as $T \rightarrow +\infty$ (while $n$ and $d$ are constant). Further, we want that as $T \rightarrow \infty$, the function $q(d,n,T)$ approaches something that scales inversely with a function of the dataset size $n$. 
We can rephrase this by saying that we would like to have that $q(d,n,T) \rightarrow 0$ as long as $d$ is fixed and $n$ and $T$ go to infinity at arbitrary rates. 

\subsection{Literature Review}  The study of the  effectiveness of large-scale stochastic optimization methods in generalizing was initiated in \cite{bottou2007tradeoffs}, \modif{who} recognized three key elements that impact the generalization characteristics: errors in optimization, errors in estimation, and approximation errors. Generalization bounds on SGD were derived in \cite{hardt2016train} based on the idea of algorithmic stability \cite{bousquet2002stability}: it was argued that SGD generalizes well because it is relatively insensitive to the removal/addition of a single point to the data set $S$. The last work  inspired a fairly large follow-up literature, which is too numerous to survey in its entirety; we will focus \modif{only on the results most relevant to our purpose.}

In the case where the functions are strongly convex, it is possible to obtain a generalization bound that is finite regardless of how long one runs SGD for; this was already in \cite{hardt2016train}. Such a bound was later also  obtained for functions satisfying the PL-type conditions in \cite{charles2018stability, lei2021sharper, liuoptimization, li2021improved}. A finite generalization bound in the strongly convex case was also present in \cite{london2017pac} which derived SGD generalization bounds using a PAC-Bayes framework. A finite bound was also derived in the non-convex setting in \cite{farghly2021time}, but in the setting where an independent batch can be drawn at any iteration. A generalization to mirror descent and relative strong convexity was considered in \cite{attia2022uniform}. Perturbations of strongly convex functions were considered in \cite{park2022generalization}, with a leading-in-$T$ term that scaled as $T/n$. {\aoo Related to this literature are the bounds obtained on the sample complexity of SGD with Langevin dynamics in \cite{raginsky2017non} subject to a dissipativity condition on the underlying function.}

There are a number of possible assumptions one can consider which lead to different bounds. For example, in the non-smooth but still convex case, \cite{bassily2020stability} derived generalization bounds where the leading term in $T$ scales as $T/n$. In the non-convex but smooth case \cite{neu2021information} building on work by \cite{russo2016controlling, xu2017information} derived a bound in terms of pathwise statistics of SGD, and whose leading term is linear in $T$. The earlier work  \cite{kuzborskij2018data} also considered the non-convex case, with a leading term in $T$ that grows as a power of $T$ determined by certain problem parameters. {\aoo Closely related is the paper \cite{lugosi2022generalization}, where an information-theoretic approach to generalization is derived for single-pass gradient descent, i.e., where each data point is used exactly once; in our terminology, this may be viewed as a case of ``early stopping.''}  

Most closely related to our work is \cite{lei2020fine}, which consider an almost identical setting. The generalization bound showed there for the non-strongly convex case (Theorem 4) provides a bound without the assumption of bounded gradients. Although it grows like $T/n^2$ with \modif{the number $T$ of iterations}, it can be made on the order of $1/\sqrt{n}$ with early stopping by choosing $T \sim n$ (the final scaling is $\sim 1/\sqrt{n}$ due to the presence of other terms in the bound). A similar idea was explored in \cite{rosasco2015learning} in the linear case which observed that tuning the number of passes through the data is akin to tuning a regularization parameter. Bounds on the expected risk were derived in \cite{rosasco2015learning} which scaled as $T^2$ but became negligible in the size of the data set $n$ after a judicious choice of early stopping. 
A high probability version of the bound of \cite{hardt2016train} in the non-strongly-convex case was recently derived in \cite{yuan2022boosting}, scaling as $\sqrt{T}/n$.

{\aoo Another similar work to ours is \cite{pensia2018generalization}, which derives an information-theoretic bound for SGD. Specialized to our assumption where we assume the true gradient is corrupted by noise of constant second moment, the bound of \cite{pensia2018generalization} grows as $\sqrt{T/n}$ An exposition of this result, along with a survey of the entire area of generalization bounds, can be found in the monograph \cite{hellstrom2023generalization}.}

Likewise, in \cite{lin2016generalization} a step-size proportional to $\sim 1/\sqrt{t}$ was explored, with the resulting generalization error having a leading term with $T$ that scales as $\sqrt{T}/n$. An improved generalization bound was also derived in \cite{feldman2018generalization}, also with leading term that is $\sqrt{T}/n$.  Under the same smooth/convex/non-strongly-convex assumptions, \cite{deng2021toward} derived a bound that scaled as $\sim (1/n) \sum_{t=1}^T \alpha_t$; the difference from \cite{hardt2016train} was that the constant in front of the bound was not the same.

Most of these works rely on arguments based on algorithmic stability. Unfortunately, in \cite{zhang2022stability} a lower bound on the most common notion of algorithmic stability was proven that grows proportionally to the sum of step-sizes $(L/(2n)) \sum_{t=1}^T \alpha_t$. This suggests that different tool(s) are necessary to obtain bounds that do not blow up over an infinite time horizon.  


Relatedly, \cite{schliserman2022stability} considers a non-strongly-convex setting under the assumption of self-boundedness, i.e., \[ ||\nabla f(w;z)|| \leq c f(w;z)^{1-\delta},\] and obtain a bound on generalization error whose leading term in $T$ scales as 
$T^{\delta}/n$, with possibly some additional multiplicative terms depending on the interplay between $T$  and the growth of the norm in approximating the minimizer.  

A popular approach is to instead consider the regularized loss
\[ l_{\lambda}(w;S)  = l(w;S) + \lambda ||w||^2, \] and use the fact that the generalization error associated with strongly convex function is finite regardless of how many iterations of SGD one does. For example, this is what is done in Chapter 13 of the textbook \cite{shalev2014understanding}. Relatedly, in the recent work \cite{lei2018stochastic}, it is assumed that the quantity 
\[ D_{\lambda} =  \inf_{w} E_{z \sim {\cal D}} [f(w_t;z)]  + \lambda ||w||^2 - \inf_{w}   E_{z \sim {\cal D}} [f(w_t;z)]\]
satisfies the inequality
\[ D_{\lambda} \leq c_{\alpha} \lambda^{\alpha}.\] Under these conditions, the leading term of the generalization error was shown to grow at the rate of $n^{-\alpha/(1+\alpha)} \log^{2/3} T$.

\renewcommand{\arraystretch}{1.5}
\begin{table}[h!]
\centering
\begin{tabular}{|m{4cm}||m{6cm}|m{4cm}|}
\hline
\textbf{Previous Work} & \textbf{Asymptotic Growth With $T$} & \textbf{Relies on Stability} \\
\hline
\cite{hardt2016train}&  $(1/n) \sum_{t=1}^T \alpha_t$ & Yes \\
\hline
\cite{feldman2018generalization} & $\sqrt{T}/n$ & Yes \\
\hline
\cite{pensia2018generalization} & $\sqrt{T/n}$ & No \\ 
\hline 
\cite{lei2020fine}& $T/n^2$ & Yes \\
\hline
\cite{deng2021toward} & $(1/n) \sum_{t=1}^T \alpha_t$ & Yes \\
\hline
\cite{yuan2022boosting} & $\sqrt{T}/n$ & Yes \\
\hline
{\color{blue} This paper}   & {\color{blue} $\sqrt{d/n}$ }  & {\color{blue} No }  \\
\hline
\end{tabular} \label{table:comparison}
\caption{Previous papers that considered generalization error associated with smooth, convex, Lipschitz optimization. The middle column shows the asymptotic growth of the generalization error as $T \rightarrow \infty$ with $n$ being the size of the data set and $d$ being the dimension \modif{of the parameter space}. 
Results on strongly convex, linear, or functions satisfying  PL or dissipativity conditions (or bounds that assume early stopping) are not included in this table.}
\label{tab:my_label}
\end{table}

\subsection{Our  Contribution}

We will further make the assumption that we are working over a compact convex set, so that we will actually consider Projected Stochastic Gradient Descent (PSGD) rather than  SGD. {\em Our main result is to derive a bound on generalization error which is uniformly upper bounded independently of $T$ for any step-size satisfying Eq. (\ref{eq:optstepsize})} 
A summary of how our paper compares to previous results is given in the table {\aoo (see also table legend for criteria for inclusion in table)}.

The purpose of our result to make the point that early stopping is not needed for convex SGD to generalize. On a technical level, almost all previous works in the table relied on stability arguments, which in this context means an analysis of how the performance of SGD changes as we change a {\em single} point in the data set. The sole exception seems to be \cite{pensia2018generalization} (building on earlier arguments of \cite{xu2017information, russo2016controlling}) which used information theoretic arguments. The main technical difference between this work and most of the previous papers is that we do not rely on stability defined in this way {\aoo (nor do we apply an information-theoretic approach)}. Instead, the core of our proof relies on an analysis of perturbed gradient descent which can be applied to the setting when {\em all} points of the underlying data set are changed in a random way. 


It is notable that our result has a generalization error that depends on the dimension $d$, while the other results in the table do not. 
However, it turns out that some kind of dimension scaling is inevitable under our assumptions, as Proposition \ref{prop:lower} in Section \ref{sec:inevitability} demonstrates. {\aoo The lower bound is quite similar to arguments made in \cite{shalev2009stochastic} and \cite{amir2021sgd}, with a few minor modifications made to handle the setting we work in (e.g., we assume Lipschitz smoothness which is not assumed in the previous work). We also mention a recent improvement to \cite{amir2021sgd} in \cite{schliserman2024dimension}.}

{\aoo It is an open question, however, whether the $\sqrt{d/n}$ scaling with dimension in our result is optimal. Note that our bound implies we need $O\left(d/\epsilon^2 \right)$ samples in dimension $d$ to have an $\epsilon$ generalization error for the empirical risk minimizer. This is not tight: a recent paper \cite{carmon2023sample} obtains a $O(d/\epsilon + 1/\epsilon^2)$ for the same sample complexity, which, while being linear in $d$, has $d$ multiplying an asymptotically negligible term as $\epsilon \rightarrow 0$. However, it does not automatically follow that because an improvement is possible for the empirical risk minimizer, an improvement is possible for the entire SGD trajectory.  
}

\bigskip 

%
%
%

\section{Our Main Result}

In this section we will state and prove our main result, the generalization bound for smooth SGD over convex, compact sets. Our proof will be complete with exception to a certain concentration result, which we will postpone to the next section. 

\subsection{Problem Settings and Statement}

We begin by (re)describing the setting we will work in. We will consider $n$ i.i.d. data points $z_1,z_2,\ldots, z_n$ generated from a distribution $\dataset$ and a loss function $f(w;z)$,  resulting in the empirical loss defined as 
\[ \ell(w;S) = \frac{1}{n} \sum_{z_i \in S}  f(w;z_i), \] where, recall, $S$ is our notation for $S=\{z_1, z_2, \ldots, z_n\}$; thus $\ell(w;S)$ is the empirical loss associated with the parameter $w$ and the dataset $S$.  
While no assumptions will be made on $\dataset$, we will make the following assumption on the loss function.

\begin{assumption}\label{assumption:function} The function $f(.;z)$ is defined on the compact convex set $\W \subset \mathbb{R}^d$ which is a subset of the ball of the Euclidean ball of radius $R$ around the origin: $\W \subset B[0,R]$. Moreover, for every $z$, the function $f(\cdot;z)$ is convex and $L$-smooth: 
    \begin{align*}
        || \nabla_w  f(w_1;z)  - \nabla_w  f(w_2;z)  || &\leq L ||w_1 - w_2||,\end{align*} 
        {\aoo Finally, the gradient has bounded variance at a minimizer of the true loss:}
        \[ 
        \aoo{{\rm var} \norm{\nabla_w f(w^*,z)} }  \aoo{\leq (\sigma^*)^2},
    \] 
    { \aoo where ${\rm var}(X)$ is the variance of a random variable $X$ and  $w^* \in \arg \min_{w \in {\cal W}} E_{z \in {\cal D}} f(w;z).$}
\end{assumption}

Note that $f(w;z)$ is assumed to be smooth in the first argument for every $z$. Naturally, in the remainder of the paper $\nabla$ will always mean $\nabla_{w}$, since we will only differentiate the loss function with respect to $w$. \modif{Besides, we do not assume that the compact set $\W$ necessarily contains a stationary point, and indeed the minimum could lie on its boundary and have a nonzero gradient. Hence there is no direct relation between $B$ and $LR$.}


\medskip

We will consider PSGD for the problem of minimizing the empirical loss $\ell(w;S)$ over $w \in \W$. We can write this method as 
\begin{equation}\label{eq:PSGD}
w_{t+1} = P_{\W} \left( w_t - \alpha_t [ \nabla_w \ell(w_t,S) + \epsilon_t] \right),
\end{equation}
where $P_{\W}$ denotes the projection onto the set $\W$, and $\epsilon_t$ is a random variable satisfying 
$$
E [\epsilon_t | w_0, \ldots, w_t] = 0 \hspace{1.5cm} 
E [||\epsilon_t||^2 | w_0, \ldots, w_t] \leq \sigma^2. 
$$
 \modif{without any other  assumptions, in particular the $\epsilon_t$ are not necessarily independent of each other.} This is a fairly general form of the update, and we remark that it includes  as a special case the method 
\begin{equation}\label{eq:Proj_block_sgd} w_{t+1} = P_{\W} \left( 
w_t - \alpha_t \ao{ \frac{1}{|{\cal B}_t|}} \sum_{z_i \in {\cal B}_t} \nabla_{w} f(w_t;z_i)
\right),
\end{equation} 
where elements of the batch ${\cal B}_t$ are drawn uniformly from $S$. We also remark that gradient descent is a special case of this problem formulation corresponding to $\sigma^2=0$, and nothing in the sequel requires $\sigma^2$ to be strictly positive.

When we need to emphasize the dependence of the iterates produced by Eq. (\ref{eq:PSGD}) on the set $S$, we will write $w_t(S)$. However, sometimes this dependence will be clear from context, in which case we will simply write $w_t$ as above. 

\medskip

A standard analysis of PSGD (e.g., \cite{nemirovski2009robust}) gives the following convergence bound:
\begin{eqnarray*}
    E \left[ \ell(\widehat{w}_t(S_n);S_n) \right] - \min_{w \in \W} \ell(w;S_n) & \leq & \frac{ ||w_0 - w^*(S_n)||^2 }{2\sum_{k=0}^t \alpha_k} + 
\sigma^2
\frac{\sum_{k=0}^t \alpha_k^2}{\sum_{k=0}^t \alpha_k}  \\ 
    & := & \modif{\bar \epsilon_{\rm opt}}(t,n),
\end{eqnarray*}
 where the second line indicates  that we will denote the entire bound by $\modif{\bar \epsilon_{\rm opt}}(t,n)$; and $w^*(S_n)$ is our notation for a minimizer, i.e., 
\[ w^*(S_n) \in \arg \min_{w \in \W} \ell(w; S_n),\] picked arbitrarily in case of multiple minimizers; and $\widehat{w}_t(S_n)$ is a running average of the iterates, 
\begin{equation}\label{eq:def:running_avg}
\widehat{w}_t(S_n) = \frac{\sum_{k=0}^t \alpha_k w_{k+1}(S_n)}{\sum_{k=0}^t \alpha_k}.
\end{equation} 

\medskip

The generalization error, which we will denote by $\epsilon_{\rm gen}(t,n)$, is then the expected increase in loss of a {\em new} datapoint $z$ when using $\widehat{w}_t(S_n)$. Formally:  
\begin{equation}\label{eq:def_gen_error}
    \epsilon_{\rm gen}(t,n) = E_{z \sim {\cal D}, S_n \sim {\cal D} } \left[ f(\widehat{w}_t(S_n);z)- \ell(\widehat{w}_t(S_n); S_n) \right],
\end{equation}
where the generation of the new data point $z \sim {\cal D}$ is independent from the generation of the original data set $S_n \sim {\cal D}$, \modif{and we remind the reader that the empirical loss $\ell$ is the average loss $f$ over the points in the training set}.

Our main result is then the following theorem. 

\medskip

\begin{theorem} \label{thm:smooth}  Under Assumption \ref{assumption:function} \modif{and provided the step sizes are bounded as $\alpha_t\leq \frac{1}{L}$,} we have

\begin{equation}\label{eq:first_bound_gen_error}
\epsilon_{\rm gen}(t,n) \leq \modif{\bar  \epsilon_{\rm opt}}(t,n) +   O \left( \frac{\aoo{\sigma^* R}+LR^2\sqrt{d}}{\sqrt{n}} \right).   
\end{equation}
 \end{theorem}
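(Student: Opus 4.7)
The plan is to anchor the analysis at the true population minimizer $w^* \in \arg\min_{w\in\W} F(w)$, where $F(w) := E_{z\sim{\cal D}}[f(w;z)]$, and to use the three--term decomposition
\[
\epsilon_{\rm gen}(t,n) \;=\; \underbrace{E[F(\widehat{w}_t(S_n)) - F(w^*)]}_{(A)} \;-\; \underbrace{E[\ell(\widehat{w}_t(S_n);S_n) - \ell(w^*;S_n)]}_{(B)} \;+\; \underbrace{E[F(w^*) - \ell(w^*;S_n)]}_{(C)}.
\]
The term $(C)$ vanishes since $w^*$ does not depend on $S_n$ and $\ell(w^*;S_n)$ is an average of i.i.d.\ copies of $f(w^*;z)$ with mean $F(w^*)$. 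It therefore suffices to upper--bound $(A)$ and lower--bound $(B)$.

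To bound $(A)$, view the PSGD update on $\ell(\cdot;S_n)$ as a PSGD update on $F$ driven by the combined noise $\epsilon_t + \xi_t(w_t)$, where $\xi_t(w) := \nabla\ell(w;S_n) - \nabla F(w)$ is the (random) empirical deviation of the gradient. Expanding $\|w_{t+1} - w^*\|^2$, invoking convexity of $F$ at $w^*$, and using that $E[\epsilon_k \mid w_k,S_n]=0$ yields, after summing in $k$ and dividing by $\sum_k \alpha_k$,
\[
(A) \;\leq\; \bar{\epsilon}_{\rm opt}(t,n) \;+\; \frac{2R}{\sum_k \alpha_k}\sum_k \alpha_k\, E\!\left[\sup_{w\in\W}\|\xi_t(w)\|\right],
\]
after bounding $|\langle \xi_k(w_k), w_k - w^*\rangle| \leq 2R\sup_{w\in\W} \|\xi_t(w)\|$. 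The step-size condition $\alpha_t\leq 1/L$ is what makes the $\sum \alpha_k^2 E[\|g_k\|^2]$ term collapse cleanly into $\bar\epsilon_{\rm opt}$. A uniform-concentration result (postponed to the next section) controls the supremum by $O((\sigma^* + LR\sqrt{d})/\sqrt{n})$; multiplying by $2R$ yields the required $O((\sigma^* R + LR^2\sqrt{d})/\sqrt{n})$ contribution.

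To lower-bound $(B)$, convexity of $\ell(\cdot;S_n)$ at $w^*$ gives
\[
\ell(\widehat{w}_t;S_n) - \ell(w^*;S_n) \;\geq\; \langle \nabla F(w^*),\widehat{w}_t - w^*\rangle + \langle U, \widehat{w}_t - w^*\rangle,
\]
where $U := \nabla\ell(w^*;S_n) - \nabla F(w^*)$ is a mean-zero average of $n$ i.i.d.\ terms, so that $E\|U\|^2 \leq (\sigma^*)^2/n$ by the variance assumption on $\nabla f(w^*;z)$. First-order optimality of $w^*$ on the convex set $\W$ makes $\langle \nabla F(w^*), \widehat{w}_t - w^*\rangle \geq 0$ pathwise, and Cauchy--Schwarz combined with $\|\widehat{w}_t - w^*\| \leq 2R$ gives $-(B) \leq 2R\sigma^*/\sqrt{n}$. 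Adding this to the bound on $(A)$ and recalling $(C)=0$ proves the theorem.

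The main obstacle is the uniform-concentration bound $E[\sup_{w\in\W}\|\xi_t(w)\|] = O((\sigma^* + LR\sqrt{d})/\sqrt{n})$. A pointwise variance bound is insufficient because $w_k(S_n)$ is itself a function of $S_n$, so one genuinely needs a uniform-in-$w$ estimate. I would obtain it by combining an $\epsilon$-net on $\W \subset B[0,R]$ (the source of the $\sqrt{d}$ factor, essentially unavoidable in view of Proposition~\ref{prop:lower}), a subgaussian-type concentration at each net point (using the variance bound on $\nabla f(w^*;z)$ plus $L$-smoothness to reduce a generic $w$ to the $w=w^*$ case), and the $2L$-Lipschitz continuity of $\xi_t$ to interpolate between net points.
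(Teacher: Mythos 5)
Your architecture is correct and is a genuinely different reduction from the paper's. The paper symmetrizes with a ghost sample $S_n'$ (Proposition \ref{prop:G_t-transform}), views PSGD on $\ell(\cdot;S_n)$ as inexact PSGD on $\ell(\cdot;S_n')$ with perturbation $\sup_{w\in\W}\|\nabla\ell(w;S_n)-\nabla\ell(w;S_n')\|$, and never needs a term like your $(B)$ because the comparator $w^*(S_n')$ is the \emph{empirical} minimizer of the target loss, so the analogous difference is nonnegative for free. You instead anchor at the population minimizer, which forces you to control $(B)$ separately; your treatment of it (convexity at $w^*$, first-order optimality of $w^*$ over $\W$ to kill $\langle\nabla F(w^*),\widehat{w}_t-w^*\rangle$, and a single-point second-moment bound on $U$) is correct and costs only the same $O(R\sigma^*/\sqrt{n})$ that appears anyway. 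Both routes then rest on the same two pillars: an inexact projected-SGD lemma in which a uniformly bounded perturbation contributes $2R\cdot\sup_w\|\cdot\|$ to the averaged suboptimality (the paper's Lemma \ref{lemma:inexact_sgd_orig}, identical in substance to your expansion of $\|w_{t+1}-w^*\|^2$), and a uniform-in-$w$ concentration of a gradient difference at rate $(\sigma^*+LR\sqrt{d})/\sqrt{n}$ (the paper's Lemma \ref{lemma:concentration}, with $\nabla\ell(\cdot;S_n')$ where you have $\nabla F$). The population-anchored version is arguably more transparent; the ghost-sample version avoids $(B)$ entirely.

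The one place you should be careful is the deferred concentration lemma, which is where essentially all of the paper's technical work lives. First, the assumption gives only a \emph{variance} bound on $\nabla f(w^*;z)$, not subgaussian tails, so you cannot get ``subgaussian-type concentration at each net point'': the quantity $\|\xi(w^*)\|$ admits no high-probability control at all under the stated hypotheses. The fix (which your parenthetical gestures at, and which the paper implements) is to split $\xi(w)=\xi(w^*)+(\xi(w)-\xi(w^*))$, handle the anchor purely in expectation via $E\|\xi(w^*)\|\leq\sqrt{E\|\xi(w^*)\|^2}\lesssim\sigma^*/\sqrt{n}$, and run the uniform argument only on the centered process $\xi(\cdot)-\xi(w^*)$, whose increments are averages of vectors bounded by $2L\|w_1-w_2\|$ and hence genuinely subgaussian. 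Second, a single-scale $\epsilon$-net plus union bound over $(1+2R/\epsilon)^d$ points with Lipschitz interpolation yields $LR\sqrt{d\log(R/\epsilon)/n}$ after optimizing $\epsilon$, i.e.\ an extra $\sqrt{\log n}$ relative to the stated $O(LR^2\sqrt{d}/\sqrt{n})$; to remove it you need full multiscale chaining (Dudley), and because the vector concentration inequality for the increments carries a dimension factor $d$ in front of the exponential, the paper in fact has to prove a modified Dudley inequality (Theorem \ref{theorem:dudley}) accommodating that prefactor. Neither issue breaks your proof of the theorem, but both must be addressed in the section you postpone.
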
 

 \bigskip

To parse the theorem statement, recall that $n$ is the size of the dataset, $B$ and $L$ are a bound on the gradient and its Lipschitz constant respectively, while, in the constraint $w \in \W$, the set $\W$ was assumed to be a subet of the ball of radius $R$ around the origin in  $\mathbb{R}^d$. 

Note that this theorem bounds $\epsilon_{\rm gen}(t,n)$ as a sum of two terms. The first term goes to zero with iteration $t$ {\ao for any choice of step-size which {\aoo satisfies Eq. (\ref{eq:optstepsize}).  }
Such step-sizes include $1/t^{\alpha}$ for $\alpha \in {\aoo [}1/2,1]$ and are common in the optimization literature. 

The second term is fixed independently of the number of steps one does. \modif{In particular, it applies if we let the algorithm converge to a minimizer of the empirical loss, and bounds thus the generalization error at the minimizer independently of how it was computed.}
We note that it depends on the data set as $O(1/\sqrt{n})$, so that it can be made arbitrarily small by taking a large sample. 

It is tempting to take the popular step-size $\alpha_t=1/\sqrt{t}$, in which case our theorem bounds the generalization error {\aoo after $T$ steps} as $\tilde{O}(1/\sqrt{T} + 1/\sqrt{n})$. In particular, we can send $n$ and $T$ to infinity at arbitrary rates and the generalization error will go to zero. This should be contrasted with the bounds obtained in previous work in Table 1, which do not have this property.
 


\subsection{Proof of our main result}

We now provide a proof of our main result, modulo a lemma on concentration whose proof we will defer to the next section. This will require a series of intermediate results and lemmas. Our first lemma is a technical result on projection we will need.

\begin{lemma} \label{lemma:proj_expectation} Let $\W$ be a closed convex set and let $v \in \W$. Suppose that $N$ is a random vector which satisfies $E[N] = 0$. We then have 
\[ \left|  E \left[ N^T P_{\W} \left( v - \alpha N \right) \right] \right|  \leq \alpha E \left[ || N ||^2 \right] \]\end{lemma}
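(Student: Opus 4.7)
The plan is to reduce this to a simple estimate using the non-expansiveness of the projection, together with the zero-mean assumption on $N$. Since $v \in \mathcal{W}$, the vector $v$ is deterministic and $E[N]=0$, so $E[N^T v] = 0$. Thus I would rewrite
\[
E\left[N^T P_{\mathcal{W}}(v - \alpha N)\right] \;=\; E\left[N^T\bigl(P_{\mathcal{W}}(v - \alpha N) - v\bigr)\right] + E\left[N^T v\right] \;=\; E\left[N^T\bigl(P_{\mathcal{W}}(v - \alpha N) - v\bigr)\right],
\]
reducing the problem to bounding the size of the residual $P_{\mathcal{W}}(v - \alpha N) - v$ in expectation against $N$.

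Next, because $v \in \mathcal{W}$ we have $P_{\mathcal{W}}(v) = v$, so by the standard non-expansiveness of the Euclidean projection onto a closed convex set,
\[
\|P_{\mathcal{W}}(v - \alpha N) - v\| \;=\; \|P_{\mathcal{W}}(v - \alpha N) - P_{\mathcal{W}}(v)\| \;\leq\; \|(v - \alpha N) - v\| \;=\; \alpha\|N\|.
\]
Applying Cauchy--Schwarz inside the expectation then gives
\[
\bigl|N^T\bigl(P_{\mathcal{W}}(v - \alpha N) - v\bigr)\bigr| \;\leq\; \|N\|\cdot \alpha \|N\| \;=\; \alpha \|N\|^2,
\]
and taking expectations (and using $|E[X]| \leq E[|X|]$) yields the claimed inequality.

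There is no serious obstacle here; the proof is essentially a one-line computation once one notices that the trivial bound $|E[N^T P_{\mathcal{W}}(v-\alpha N)]|\leq E[\|N\|\cdot\|P_{\mathcal{W}}(v-\alpha N)\|]$ (which is $O(R\,E\|N\|)$ and does not vanish with $\alpha$) is wasteful: the right idea is to exploit the mean-zero property of $N$ to subtract off the deterministic $v$ and only pay for the projection-induced deviation, which is controlled by a factor of $\alpha$ through non-expansiveness. The only tiny care needed is to verify that the inner product $N^T P_{\mathcal W}(v-\alpha N)$ is integrable, which is automatic since $P_{\mathcal W}(v-\alpha N)$ is bounded (it lies in $\mathcal{W}\subset B[0,R]$) whenever $\mathcal W$ is bounded, or, more generally, follows from the bound $\|P_{\mathcal W}(v-\alpha N)-v\|\leq \alpha\|N\|$ combined with $E\|N\|^2<\infty$.
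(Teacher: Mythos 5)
Your proof is correct and follows essentially the same route as the paper's: both subtract off the deterministic term $N^T v = N^T P_{\W}(v)$ using $E[N]=0$, then bound the remaining increment via non-expansiveness of the projection and Cauchy--Schwarz. The added remark on integrability is a harmless refinement.
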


\begin{proof} By non-expansiveness of the projection, we have 
$$
||  P_{\W} \left( v - \alpha N \right) - P_{\W}(v) || \leq |\alpha| ||N||.
$$
Hence:
\begin{align*}
\left| N^T \left(P_{\W} \left( v - \alpha N \right) - P_{\W}(v)\right)  \right| & \leq \max_{y:||y||\leq |\alpha| ||N||} N^Ty \\ & = \alpha ||N||^2
\end{align*}
Now taking an expectation we obtain
\begin{align*}
 \left|  E \left[  N^T P_{\W} \left( v - \alpha N \right) \right] \right|  &= \left| E \left[ N^T P_{\W} \left( v \right) \right]  +  E \left[ N^T \left(P_{\W} \left( v - \alpha N \right) - P_{\W}(v)\right)\right] \right|  \\
 & = \left| E \left[ N^T \left(P_{\W} \left( v - \alpha N \right) - P_{\W}(v)\right)\right] \right| \\ 
 & \leq E \left[ \left| N^T \left(P_{\W} \left( v - \alpha N \right) - P_{\W}(v)\right) \right| \right) \\
  &\leq \alpha E ||N||^2. 
\end{align*}
\end{proof}

Our next lemma gives an analysis of projected gradient descent over a convex set with error, and traces how the errors affect the final performance. It is a modification of an analysis performed in  \cite{devolder2014first} for the unprojected case.

\begin{lemma} \label{lemma:inexact_sgd_orig}  
Consider the update
$$ w_{t+1} = P_{\W} \left( w_t - \alpha_t [ \nabla_w \ell(w_t;S) + p_t + \epsilon_t] \right),$$ 
where $\W \subset B(0,R)$, the weights $\alpha_t$ satisfy  $\alpha_t\leq 1/L$, \modif{$p_t$ are arbitrary perturbations and $\epsilon_t$ random ones.} We will assume that,  \begin{eqnarray*} 
E [\epsilon_t | w_1, \ldots, w_t, p_1, \ldots, p_t ] & = & 0 \\ 
E \left[ || \epsilon_t||^2 | w_1, \ldots, w_t \right] & \leq & \sigma^2  \\ 
||p_t|| & \leq & \bar p_t. 
\end{eqnarray*}
Let $\ell_S^*$ the minimum of $\ell(.;S)$ over $\W$, with $w^*$ arbitrarily selected as a point that achieves that minimum. Then we have 
$$
E \left[ \ell ( \hat{w}_t) - \ell_S^*  \right] \leq \frac{ ||w_0 - w^*||^2 }{2\sum_{t=0}^T \alpha_t} + 
\sigma^2
\frac{\sum_{t=0}^T \alpha_t^2}{\sum_{t=0}^T \alpha_t}  +  \frac{\sum_{t=0}^T \alpha_t \delta_t 2R}{\sum_{t=0}^T \alpha_t} ,
$$
\modif{for the running average $\hat{w}_t$ defined as in \eqref{eq:def:running_avg}.}

\end{lemma}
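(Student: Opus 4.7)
My plan is to combine three ingredients: the variational inequality for projection, the one-step descent inequality implied by $L$-smoothness, and Lemma~\ref{lemma:proj_expectation} to manage the interaction between $\epsilon_t$ and the projection in expectation.

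First, writing $g_t=\nabla\ell(w_t)+p_t+\epsilon_t$ so that $w_{t+1}=P_{\W}(w_t-\alpha_t g_t)$, the first-order optimality of the projection together with the cosine identity gives
\begin{equation*}
\alpha_t g_t^{T}(w_{t+1}-w^*)\leq \tfrac12\bigl(\|w_t-w^*\|^2-\|w_{t+1}-w^*\|^2-\|w_{t+1}-w_t\|^2\bigr).
\end{equation*}
Combining convexity ($\ell(w_t)-\ell(w^*)\leq\nabla\ell(w_t)^{T}(w_t-w^*)$) with $L$-smoothness ($\ell(w_{t+1})\leq\ell(w_t)+\nabla\ell(w_t)^{T}(w_{t+1}-w_t)+\tfrac{L}{2}\|w_{t+1}-w_t\|^2$) yields $\ell(w_{t+1})-\ell(w^*)\leq \nabla\ell(w_t)^{T}(w_{t+1}-w^*)+\tfrac{L}{2}\|w_{t+1}-w_t\|^2$. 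Substituting $\nabla\ell(w_t)=g_t-p_t-\epsilon_t$ and using $\alpha_t\leq 1/L$ so that the two $\|w_{t+1}-w_t\|^2$ contributions cancel non-positively, I obtain the master one-step inequality
\begin{equation*}
\alpha_t(\ell(w_{t+1})-\ell^*_S)\leq \tfrac12\bigl(\|w_t-w^*\|^2-\|w_{t+1}-w^*\|^2\bigr)-\alpha_t(p_t+\epsilon_t)^{T}(w_{t+1}-w^*).
\end{equation*}

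Next I would take expectations. The perturbation term is handled by $\|w_{t+1}-w^*\|\leq 2R$ and $\|p_t\|\leq\bar p_t$, giving $|E[\alpha_t p_t^{T}(w_{t+1}-w^*)]|\leq 2R\alpha_t\bar p_t$. The principal obstacle is the noise term $E[\alpha_t\epsilon_t^{T}w_{t+1}]$, which does not vanish because $w_{t+1}$ depends on $\epsilon_t$ through the projection. This is precisely what Lemma~\ref{lemma:proj_expectation} is built for: taking $v=w_t-\alpha_t(\nabla\ell(w_t)+p_t)$, which is measurable with respect to the sigma-algebra $\mathcal{F}_t$ generated by the past iterates and perturbations, we have $w_{t+1}=P_{\W}(v-\alpha_t\epsilon_t)$. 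Conditioning on $\mathcal{F}_t$, the proof of the lemma carries through without change, since it only uses non-expansiveness of $P_{\W}$ and $E[\epsilon_t^{T}P_{\W}(v)\mid\mathcal{F}_t]=P_{\W}(v)^{T}E[\epsilon_t\mid\mathcal{F}_t]=0$, so the requirement $v\in\W$ is in fact not needed. This yields $|E[\epsilon_t^{T}w_{t+1}\mid\mathcal{F}_t]|\leq \alpha_t E[\|\epsilon_t\|^2\mid\mathcal{F}_t]$; after taking outer expectations and combining with $E[\epsilon_t^{T}w^*]=0$ one gets $|E[\alpha_t\epsilon_t^{T}(w_{t+1}-w^*)]|\leq \alpha_t^2\sigma^2$.

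Finally I would sum the master inequality over $t=0,\dots,T$: the first bracket telescopes to $\tfrac12\|w_0-w^*\|^2$ (the residual $-\|w_{T+1}-w^*\|^2$ being non-positive), and the perturbation and noise contributions accumulate into $2R\sum_t\alpha_t\bar p_t$ and $\sigma^2\sum_t\alpha_t^2$ respectively. Dividing by $\sum_t\alpha_t$ and applying Jensen's inequality to the convex $\ell$ at the running average~\eqref{eq:def:running_avg} converts the weighted mean of $\ell(w_{t+1})$ into an upper bound on $\ell(\hat w_t)$, producing the stated inequality. The mild extension of Lemma~\ref{lemma:proj_expectation} beyond $v\in\W$ is the only subtle point; everything else is routine telescoping and Cauchy--Schwarz bookkeeping.
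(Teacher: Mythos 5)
Your proposal is correct and follows essentially the same route as the paper's proof: the same one-step inequality from projection plus convexity and $L$-smoothness, the same Cauchy--Schwarz bound $2R\alpha_t\bar p_t$ on the perturbation term, the same appeal to Lemma~\ref{lemma:proj_expectation} with $v=w_t-\alpha_t(\nabla\ell(w_t)+p_t)$ to control $E[\epsilon_t^{T}w_{t+1}]$, and the same telescoping-plus-Jensen finish. Your remark that the hypothesis $v\in\W$ in Lemma~\ref{lemma:proj_expectation} is not actually used is apt, since the paper itself applies that lemma to a point $v$ that need not lie in $\W$.
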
 

\begin{proof} For simplicity of notation, we use the notation,  $$\tilde g_t := \nabla_w \ell (w_t;S) + p_t +\epsilon_t$$ so that the PSGD  iteration can be compactly written as,
$$
w_{t+1} = P_{\W} \left( w_t - \alpha_t \tilde g_t \right).
$$ 

We first argue that 
\begin{align}
||w_{t+1} - w^*||^2 - ||w_t - w^*||^2 & = || P_{\W} (w_t - \alpha_t \tilde{g}_t) - w^*||^2  - ||w_t - w^*||^2 \nonumber  \\
&\leq ||w_{t} - \alpha_t \tilde{g}_t -w^*||^2 -  ||w_t - w^*||^2 \nonumber,
\end{align} using the nonexpansiveness of projection. 
Next, we use the identity, 
\[ 
||y-z||^2 = ||x-z||^2  - 2 \modif{\langle x-y,y-z \rangle} - ||y-x||^2,
\]
with $z=w^*, x = w_t, y = w_{t+1}$ to obtain that      

\begin{equation}  ||w_{t} - \alpha_t \tilde{g}_t -w^*||^2 -  ||w_t - w^*||^2  = - 2 \alpha_t \tilde{g}_t^T (w_{t+1}-w^* )  - ||w_{t+1} - w_t||^2 \nonumber
\end{equation} 
Putting this together, we obtain 
\begin{align}
||w_{t+1} - w^*||^2 - ||w_t - w^*||^2 
& \leq  - 2 \alpha_t \tilde{g}_t^T (w_{t+1}-w^* )  - ||w_{t+1} - w_t||^2 \nonumber \\ 
&=  - 2 \alpha_t \left[\nabla_w \ell(w_t,S)^T (w_{t+1}-w^* )  +\frac{1}{2\alpha_t} ||w_{t+1} - w_t||^2\right] \label{eq:basis_dec_cvx} \\
& - 2\alpha_t p_t^T (w_{t+1}-w^* ) - 2\alpha_t \epsilon_t^T (w_{t+1}-w^* )\label{eq:basis_dec_other}.
\end{align}

We begin by analyzing the term between parenthesis in \eqref{eq:basis_dec_cvx}. Standard convexity results together with $1/(2\alpha_t) \geq L/2$ imply that \begin{small} 
\begin{align}
    \nabla_w \ell(w_t,S)^T (w_{t+1}-w^* )  +\frac{1}{2\alpha_t} ||w_{t+1} - w_t||^2&=   \nabla_w \ell(w_t,S)^T (w_{t+1}-w_t ) +\frac{1}{2\alpha_t} ||w_{t+1} - w_t||^2 \nonumber \\ & ~~~~~~ - \nabla_w \ell(w_t,S)^T (w^*-w_t ) \nonumber  \\
    &\geq (\ell(w_{t+1}) - \ell(w_t)) - (\ell(w^*) - \ell(w_t)) \nonumber  \\ & = \ell(w_{t+1}) - \ell(w^*) \nonumber 
\end{align}
\end{small} 
We next consider bounding (\ref{eq:basis_dec_other}). We use Cauchy-Schwarz to obtain that 
$$|2\alpha_t p_t^T (w_{t+1}-w^* )|\leq 2\alpha_t \bar{p}_t (2R), $$ where, recall, $\W$ is contained in $B(0,R)$ and $\bar{p}_t$ is a bound on $||p_t||.$ 

Moving to the remaining term in (\ref{eq:basis_dec_other}), we see that
\begin{align*}
- 2\alpha_t  \epsilon_t^T (w_{t+1}-w^* ) & = -2\alpha_t  \epsilon_t^T (w_{t+1} - w_t) - 2\alpha_t   \epsilon_t^T (w_{t} - w^*)\\
&= -2\alpha_t  \epsilon_t^T\left( P_{\W} \left( w_t - \alpha_t \nabla_w \ell(w_t,S) - \alpha_t p_t  - \alpha_t  \epsilon_t \right)  - w_t \right) -  2\alpha_t   \epsilon_t^T (w_{t} - w^*)
\end{align*}
We have
\[ E \left[ - 2\alpha_t \epsilon_t^T (w_{t+1}-w^* )\right] = -2 \alpha_t E [  \epsilon_t^T  P_{\W} (v-\alpha_t  \epsilon_t) ],  \]
where $$ 
v = w_t - \alpha_t \nabla_w \ell(w_t,S) - \alpha_t p_t.$$
We use Lemma \ref{lemma:proj_expectation} \modif{and the assumption 
$E [\epsilon_t | w_1, \ldots, w_t, p_1, \ldots, p_t ]  =  0 $}
to obtain, 
$$
E[- 2\alpha_t  \epsilon_t^T (w_{t+1}-w^* ) | w_t] \leq 2\alpha^2_t\sigma^2.
$$
Re-introducing all these developments into \eqref{eq:basis_dec_cvx} and \eqref{eq:basis_dec_other} yields
$$
E[||w_{t+1} - w^*||^2 | w_t] - ||w_t - w^*||^2 \leq -2\alpha_t (E[\ell(w_{t+1})|w_t] - \ell(w^*)) +  2\alpha^2_t\sigma^2 + 2\alpha_t \bar{p}_t (2R).
$$
Taking an additional expectation and re-arranging, we obtain
\begin{equation}
   \alpha_t \prt{ E(\ell(w_{t+1})) - \ell(w^*) }\leq \frac{E[||w_{t} - w^*||^2 ] - E[ ||w_{t+1} - w^*||^2 ]}{2}  + \alpha^2_t \sigma^2 + \alpha_t p_t (2R).
\end{equation}
Summing over all $t$ and dividing by the sum of the weights yields 
$$
\frac{\sum_t  \alpha_t E(\ell(w_{t+1}))}{\sum_t  \alpha_t } - \ell(w^*) \leq \frac{||w_0-w^*||^2}{2\sum_t  \alpha_t} + \sigma^2 \frac{\sum_t  \alpha_t^2}{\sum_t  \alpha_t} + 2R \frac{\sum_t  \alpha_t p_t}{\sum_t  \alpha_t},
$$
from which the lemma follows by convexity. 
\end{proof}

Our next proposition is a slight restatement of the definition of generalization.

\begin{proposition} \label{prop:G_t-transform} 
\[ \epsilon_{\rm gen}(t,n) =  E \left[ \ell(\widehat{w}_t(S_n), S_n') - \ell(\widehat{w}_t(S_n'), S_n') \right], \] where $S_n'$ is another data set of the same size $n$ sampled i.i.d. from ${\cal D}$.
\end{proposition}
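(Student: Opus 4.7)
The plan is a standard ghost-sample (or symmetrization) argument. I would expand the right-hand side by writing $\ell(\widehat{w}_t(S_n), S_n') = \frac{1}{n}\sum_{i=1}^n f(\widehat{w}_t(S_n); z_i')$ where $S_n' = \{z_1',\ldots,z_n'\}$, and then handle the two expectations separately.

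First I would address the cross term $E[\ell(\widehat{w}_t(S_n), S_n')]$. Since $S_n$ and $S_n'$ are independent and $\widehat{w}_t(S_n)$ is a deterministic function of $S_n$ and of the internal noise of PSGD, $\widehat{w}_t(S_n)$ is independent of every $z_i'$. Conditioning on $S_n$ (and the PSGD randomness) and using that each $z_i'$ is distributed as $\cal D$, I get $E[f(\widehat{w}_t(S_n); z_i') \mid S_n] = E_{z\sim{\cal D}}[f(\widehat{w}_t(S_n); z) \mid S_n]$ for every $i$. Averaging over $i$ and taking the outer expectation over $S_n$ yields
\[
E[\ell(\widehat{w}_t(S_n), S_n')] = E_{z\sim{\cal D},\,S_n\sim{\cal D}}\bigl[f(\widehat{w}_t(S_n); z)\bigr],
\]
which is exactly the first term appearing in the definition \eqref{eq:def_gen_error} of $\epsilon_{\rm gen}(t,n)$.

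Next I would handle the term $E[\ell(\widehat{w}_t(S_n'), S_n')]$ by pure distributional symmetry: since $S_n$ and $S_n'$ are i.i.d. copies of each other (and the PSGD randomness used to produce $\widehat{w}_t(\cdot)$ can be generated identically in either case), the pairs $(S_n, \widehat{w}_t(S_n))$ and $(S_n', \widehat{w}_t(S_n'))$ have the same joint distribution. Therefore
\[
E[\ell(\widehat{w}_t(S_n'), S_n')] = E[\ell(\widehat{w}_t(S_n), S_n)],
\]
which matches the second term in \eqref{eq:def_gen_error}. Subtracting the two identities gives the claim.

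There is no real obstacle here; the only point that requires a little care is making sure the independence argument treats the internal randomness $\{\epsilon_t\}$ of PSGD correctly, i.e., that the noise used to define $\widehat{w}_t(S_n)$ is independent of the ghost sample $S_n'$. This can be ensured by defining $\widehat{w}_t(\cdot)$ as a deterministic function of a data set and an independent auxiliary noise sequence, which is implicit in the problem setting.
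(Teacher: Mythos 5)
Your proof is correct and follows essentially the same route as the paper: the cross term is reduced to $E_{z\sim{\cal D},S_n}[f(\widehat{w}_t(S_n);z)]$ using the i.i.d.\ structure of the ghost sample, and the second term is matched to $E[\ell(\widehat{w}_t(S_n),S_n)]$ by the identical distribution of $S_n$ and $S_n'$. Your extra remark about handling the internal PSGD noise is a sensible precaution the paper leaves implicit.
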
 

\begin{proof} Indeed, 
\begin{eqnarray*}
E  \left[ \ell(\widehat{w}_t(S_n), S_n') \right]  & = & E \left[  \frac{1}{n} \sum_{z_i' \in S'} f(\widehat{w}_t(S_n); z_i')  \right] \\ 
& = & E \left[ f(\widehat{w}_t(\modif{S_n}); z_1') \right]
\end{eqnarray*} where the last equality follows the $z_i'$ here are all sampled i.i.d. from the distribution ${\cal D}$ so the expectation of each term in the sum on the first line is identical. Observing that $S_n$ and $S_n'$ are identically distributed and using the definition of $\epsilon_{\rm gen}(t,n)$ completes the proof. 
\end{proof}

\bigskip 

Next, given two arbitrary sets $S, S'$ \modif{of data points}, we can define the measure
\begin{equation}\label{eq:def_DeltaSS'}
\Delta(S,S')  = \sup_{w \in \W}|| \nabla_w  \ell (S, w) - \nabla_w \ell (S', w)||. 
\end{equation}
Intuitively, $\Delta(S,S')$ measures the difference between the two sets of $S$ and $S'$ in terms of the averages of the gradients of $f(w;z)$ over points in $z \in S$ vs. $z \in S'$.

We can bound this difference in the following lemma. We will use the notation 
\[ a \lessim b,\] which is just a compact substitute for $O(\cdot)$ notation: it means there exists a positive absolute constant $K$ such that $a \leq Kb$.

\begin{lemma} \label{lemma:concentration} Under Assumption \ref{assumption:function} and supposing that $S,S'$ are sets of size $n$ drawn i.i.d. from some distribution ${\mathcal D}$, 
\[E  \left[ \Delta(S,S') \right] \lesssim  \frac{\aoo{\sigma^*}+LR\sqrt{d}}{\sqrt{n}}.\] 
\end{lemma}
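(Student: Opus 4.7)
The plan is to split the supremum into a contribution evaluated at the reference point $w^*$, whose concentration is governed by the variance hypothesis, and a residual capturing the $w$-dependence, which I would control uniformly via a chaining (Dudley) argument. Concretely, for each $w \in \W$, add and subtract the gradients at $w^*$ to write
\[
\frac{1}{n}\sum_{i=1}^n \left(\nabla_w f(w,z_i) - \nabla_w f(w,z_i')\right) = U_n + M_n(w),
\]
where $U_n = \frac{1}{n}\sum_i \left(\nabla_w f(w^*,z_i) - \nabla_w f(w^*,z_i')\right)$ is $w$-independent and $M_n(w) = \frac{1}{n}\sum_i \left(h(w,z_i) - h(w,z_i')\right)$ with $h(w,z) := \nabla_w f(w,z) - \nabla_w f(w^*,z)$. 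The triangle inequality yields $\Delta(S,S') \leq \|U_n\| + \sup_{w \in \W} \|M_n(w)\|$, so it suffices to bound these two pieces separately.

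The first piece is easy: the summands of $U_n$ are i.i.d., centered, and, under Assumption~\ref{assumption:function}, have second moment controlled by $(\sigma^*)^2$. A direct variance calculation gives $E\|U_n\|^2 \lesssim (\sigma^*)^2/n$, and hence $E\|U_n\| \lesssim \sigma^*/\sqrt{n}$ by Jensen. The second piece is the main obstacle. Two structural facts about $M_n$ are what I would use: $M_n(w^*) = 0$, and smoothness gives $\|h(w,z) - h(w',z)\| \leq L\|w - w'\|$ uniformly in $z$, so each summand of $M_n(w) - M_n(w')$ is a mean-zero vector of norm at most $2L\|w - w'\|$. By a vector Hoeffding inequality, the real-valued increment $\|M_n(w) - M_n(w')\|$ is then sub-Gaussian with proxy variance of order $L^2\|w-w'\|^2/n$.

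To turn this into a uniform bound I would run a Dudley-type chaining along dyadic $2^{-k}R$-nets of $\W$; since $\W \subset B[0,R] \subset \R^d$ has metric $\epsilon$-covering number at most $(CR/\epsilon)^d$, the contribution from scale $k$ is of order $L \cdot 2^{-k}R \sqrt{dk/n}$, and summing the convergent series $\sum_{k\geq 1} 2^{-k}\sqrt{k}$ yields
\[
E \sup_{w \in \W} \|M_n(w)\| \lesssim \frac{LR\sqrt{d}}{\sqrt{n}}.
\]
A one-scale $\epsilon$-net plus union bound would already deliver a bound of this form up to a spurious $\sqrt{\log n}$ factor; the whole reason for doing the multiscale argument is to match the clean $LR\sqrt{d}/\sqrt{n}$ scaling stated in the lemma. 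Combining the two estimates yields the claim.
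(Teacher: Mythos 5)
Your argument is correct and reaches the paper's bound, but by a somewhat different route. The paper chains directly on the field $Z_w=\Delta_w(S,S')$: it anchors Dudley's argument at an arbitrary $w_0$, bounds $E Z_{w_0}$ by a second-moment computation in which the add-and-subtract of $\nabla f(w^*;\cdot)$ happens \emph{inside} the expectation (Lemma \ref{lemma:singlewexp}), and controls the increments via a vector concentration inequality (Corollary 7 of \cite{jin2019short}) that carries an explicit factor of $d$ in the tail; this forces the authors to prove a modified Dudley inequality (Theorem \ref{theorem:dudley}) tolerating a prefactor $K$ in the increment tails, with $K=d$. You instead split the random field itself, $\Delta_w \le \|U_n\| + \sup_w\|M_n(w)\|$, which has the pleasant features that the anchor term is exactly the $w^*$ contribution (so only the variance hypothesis enters there) and that $M_n(w^*)=0$, so the chaining starts from zero and standard Dudley suffices. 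The one step you state too casually is the claim that $\|M_n(w)-M_n(w')\|$ is sub-Gaussian with proxy variance $L^2\|w-w'\|^2/n$: the off-the-shelf ``vector Hoeffding'' bounds (Hayes, or the Jin et al.\ corollary the paper uses) come with a dimension-dependent prefactor in the tail, which is precisely the obstruction the paper's modified Dudley is designed to absorb. Your step is nonetheless salvageable without that machinery: since the summands are bounded by $2L\|w-w'\|$ and mean zero, $E\|\sum_i Y_i\|\le 2L\|w-w'\|\sqrt{n}$ and McDiarmid's inequality gives a dimension-free sub-Gaussian tail around that mean, which combine into $P(\|\sum_i Y_i\|\ge u)\le 2e^{-cu^2/(nL^2\|w-w'\|^2)}$ for all $u$; and even if one used the $d$-dependent tail instead, the resulting extra $\sqrt{\log d}$ per scale is dominated by the $\sqrt{d}$ already coming from the covering numbers. (Both you and the paper also silently read the assumption ${\rm var}\,\|\nabla f(w^*;z)\|\le(\sigma^*)^2$ as a bound on $E\|\nabla f(w^*;z)-E\nabla f(w^*;z)\|^2$, so this is shared looseness rather than a defect of your argument.) Net effect: same bound, same chaining skeleton, but a cleaner decomposition that sidesteps the need for the paper's modified Dudley inequality.
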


We postpone the proof of this lemma until the next section. With in place, we can now finish the proof of our main result, Theorem \ref{thm:smooth}. 

\medskip 

\begin{proof}[Proof of Theorem \ref{thm:smooth}] Fix $S_n,S_n'$ and let us define
\[ w^*(S') \in \arg \min_{w \in \W} \ell (w,S_n'), \] where, if the set of minimizers on the right-hand side has cardinality greater than one,  we pick one arbitrarily and designate it to be $w^*(S_n')$. 

We view the PSGD process on $\ell(w,S_n)$ as an inexact gradient descent process on $\ell(w,S_n')$. 
Indeed, we can take the PSGD dynamics 
\[ w_{t+1} = w_t - \alpha_t \left(\nabla \modif{\ell} (w_t, S_n) + \epsilon_t \right),
\] and rewrite them as 
\[ w_{t+1} = w_t - \alpha_t \left(\nabla \modif{\ell}(w_t, S_n') + \left[ \nabla \modif{\ell}(w_t, S_n) - \nabla{\ell}(w_t, S_n') \right] + \epsilon_t \right),
\] and then view the term in brackets as an arbitrary perturbation \modif{$p_t$ (and moreover the random variable $\epsilon_t$ introduced by the dynamics is independent of $p_t$).}
Thus we can apply Lemma \ref{lemma:inexact_sgd_orig} with 
\begin{align*} \bar{p}_t & = \sup_{w \in \W} || \nabla \modif{\ell} (w_t, S_n) - \nabla \modif{\ell}(w_t, S_n') ||  \\ 
& = \Delta(S_n,S_n'),
\end{align*} where the second equation is just the definition of $\Delta(S_n,S_n')$. The guarantee of Lemma \ref{lemma:inexact_sgd_orig} gives: 
\begin{equation} \label{eq:sub-bound-1}  E [ \ell (\hat{w}_t(S_n), S_n') - \ell (w^*(S_n'), S_n')  ~|~ S_n,S_n'] \leq \frac{ ||w_0 - w^*||^2 }{2\sum_t \alpha_t} + 
\sigma^2
\frac{\sum_t \alpha_t^2}{\sum_t \alpha_t}  +  2R \Delta(S_n,S_n'). \end{equation}

The above is true for any choice of $S_n, S_n'$. We now take expectations with respect to $S_n,S_n'$, and appeal to  Proposition \ref{prop:G_t-transform}:
\begin{align*}  \epsilon_{\rm gen}(t,n) & \leq \frac{ ||w_0 - w^*||^2 }{2\sum_t \alpha_t} + 
\sigma^2
\frac{\sum_t \alpha_t^2}{\sum_t \alpha_t} +  2R \Delta(S_n,S_n') \\ 
& = \modif{\bar  \epsilon_{\rm opt}}(t,n) + 2 R E \Delta(S_n, S_n')
\end{align*} Finally, we use  Lemma \ref{lemma:concentration} to bound the last term. This concludes the proof. 
\end{proof}

\section{Concentration of gradient differences\label{sec:concentration}}

The purpose of this section is to prove Lemma \ref{lemma:concentration}, \modif{which will complete the proof Theorem \ref{thm:smooth}}. Specifically, our goal is to bound the quantity 
$\Delta(S_n,S_n')$ defined in \eqref{eq:def_DeltaSS'} as
\[ \sup_{w \in \W} || \nabla_{w} \ell (S_n,w) - \nabla_{w} \ell (S_n',w)||. \]
Now for each $w$ this is quite easy but what is needed is to bound the sup; the required concentration result for this is almost, but not quite, \modif{sub-Gaussian}. Dudley's inequality \cite{dudley1967sizes} allows bounding the supremum over all $w$, but we need to modify it to handle random variables that are not quite \modif{sub-Gaussian}, which will be the object of the first subsection. The second will be dedicated to the application of this bound in our context.

\subsection{Modified Dudley's Inequality}

We consider a set $\W$ endowed with a metric $d(w_1, w_2)$, and assume the set $W$ is contained within a ball of radius $\widehat{R}$ around the origin in that metric. 
We further consider a random variables $Z_w$ for each $w\in \W$, assumed vary smoothly with $w$ in the following sense:
\begin{assumption}[Increment concentration] \label{ass:increment_concentration} For some $K \geq 1$, 
\[ P( | Z_{w_1} - Z_{w_2}| \geq u) \leq 2 K e^{-u^2/d^2(w_1,w_2)}\]
\end{assumption} 
This assumption says that the differences between the random variables concentrate in almost \modif{sub-Gaussian} manner. The difference is the factor of $K$ multiplying the bound on the right-hand side. As far as we are aware, this assumption cannot be restated in terms of having some other random variable be \modif{sub-Gaussian}.

We will bound $E \sup_{w\in  \W} Z_w$ in terms of the expected value of $Z_w$ at the origin and of some integral that depends on the so-called covering number of $\W$: For every $\epsilon >0$ we define a finite subset $G_\epsilon\subset \W$ with the property that every $w\in \W$ is at a distance at most $\epsilon$ from a point in $G_\epsilon$ in terms of the metric $d(.,.)$. We further assume that the number of point in $G_\epsilon$ is the smallest possible over all sets satisfying this property, and denote it by $N_\epsilon$. Clearly, $N_\epsilon$ is non-increasing with $\epsilon$ as for every $\epsilon'>\epsilon$, the set $G_\epsilon$ is at a distance no larger than $\epsilon'$ from every point in $\W$. Besides, for every $\epsilon>\widehat R$ we can build a $G_\epsilon$ consisting of one single point, so that $N_\epsilon = 1$. We will refer to the $G_\epsilon$ as grids. We are now ready to state the our modification of Dudley's bound, reminding that $a \lesssim b$ means there is an absolute constant $C$ such that $a \leq C b$.

\begin{theorem}  \label{theorem:dudley} Under Assumption \ref{ass:increment_concentration},
\begin{equation} E \sup_{w \in \W} Z_w \lesssim E Z_{w_0} + \sum_{j > i} 2^{-j} \sqrt{\log(K N_{2^{-j}})}. \label{eq:dudley_discrete_form}
\end{equation} 
where $i$ is the largest integer such that $2^{-i} \geq \widehat R$. Likewise, 
\begin{equation} E \sup_{w \in \W} Z_w \lesssim E Z_{w_0} + \int_0^{\widehat{R}} \sqrt{\log(K N_{\epsilon})} ~ d \epsilon, \label{eq:dudley_integral_form}
\end{equation}
\end{theorem}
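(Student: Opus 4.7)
The plan is to adapt the classical chaining proof of Dudley's inequality, tracking how the extra factor $K$ in Assumption \ref{ass:increment_concentration} propagates through the union bounds. The key observation is that the tail in Assumption \ref{ass:increment_concentration} is sub-Gaussian up to a multiplicative constant $K$, so the expected maximum of $N$ such variables scales like $\sqrt{\log(KN)}$ rather than the usual $\sqrt{\log N}$. Since $K \geq 1$, combining this with the standard $\sum 2^{-j}$ chain gives the claimed bound with $\log(K N_\epsilon)$ inside the square root.

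More concretely, I would fix, for each $j > i$, a nearest-point map $\pi_j : \W \to G_{2^{-j}}$, and set $\pi_i(w) := w_0$ (this is legitimate since at scale $2^{-i} \geq \widehat{R}$ we may take $G_{2^{-i}} = \{w_0\}$). Reducing first to a countable dense subset of $\W$ to avoid measurability issues, I would telescope
\[
Z_w - Z_{w_0} \;=\; \sum_{j > i} \bigl( Z_{\pi_j(w)} - Z_{\pi_{j-1}(w)}\bigr),
\]
noting that $d(\pi_j(w),\pi_{j-1}(w)) \leq 2^{-j} + 2^{-(j-1)} \leq 3 \cdot 2^{-j}$ by the triangle inequality. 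Taking the supremum over $w$ and then the expectation, it suffices to bound, for each $j$, the expected maximum of at most $N_{2^{-j}} \cdot N_{2^{-(j-1)}} \leq N_{2^{-j}}^2$ random variables, each satisfying
\[
P\bigl(|Z_{\pi_j(w)} - Z_{\pi_{j-1}(w)}| \geq u\bigr) \;\leq\; 2K \exp\bigl(-u^2/(9\cdot 4^{-j})\bigr).
\]

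From the tail bound, a standard computation (integrating $P(\max > u)$ against $du$ after a union bound over the $N_{2^{-j}}^2$ pairs) yields
\[
E \max_w |Z_{\pi_j(w)} - Z_{\pi_{j-1}(w)}| \;\lesssim\; 2^{-j}\sqrt{\log(K N_{2^{-j}}^2)} \;\lesssim\; 2^{-j}\sqrt{\log(K N_{2^{-j}})},
\]
where the last step uses $K \geq 1$ so $\log(K N^2) \leq 2\log(K N)$. Summing over $j > i$ gives (\ref{eq:dudley_discrete_form}). For the integral form (\ref{eq:dudley_integral_form}), I would use monotonicity of $N_\epsilon$: on each interval $[2^{-j}, 2^{-(j-1)}]$ of length $2^{-j}$, the integrand is at least $\sqrt{\log(K N_{2^{-(j-1)}})}$, so the dyadic sum is dominated (up to absolute constants) by the integral, and conversely one recovers the sum by a similar comparison on the intervals $[2^{-(j+1)}, 2^{-j}]$.

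The main obstacle I anticipate is the careful bookkeeping of the $K$ factor in the max-of-sub-Gaussian-like bound. For genuine sub-Gaussians one has $E\max_{i \leq N} |X_i| \lesssim \sigma\sqrt{\log N}$; here the prefactor $K$ in the tail forces replacing $\log N$ by $\log(KN)$, and one must be careful that this substitution remains valid when $K N_\epsilon$ is close to $1$ (so that $\sqrt{\log(KN_\epsilon)}$ could in principle be zero or undefined). Using $K \geq 1$ and the convention that the summands for $j$ with $N_{2^{-j}} = 1$ contribute only through $\sqrt{\log K}$ (times $2^{-j}$, hence summable) handles this cleanly. Measurability of $\sup_{w \in \W} Z_w$ is handled by standard separability assumptions, i.e., by first proving the bound on an increasing sequence of finite subsets and passing to the limit via monotone convergence.
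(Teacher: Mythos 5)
Your proposal is correct and follows essentially the same route as the paper: dyadic chaining with nearest-point projections, a union bound over the at most $N_{2^{-j}}N_{2^{-(j-1)}}$ pairs at each scale, and tracking the extra factor $K$ inside the logarithm. The only (immaterial) difference is that you bound $E\max_w |Z_{\pi_j(w)}-Z_{\pi_{j-1}(w)}|$ scale by scale and sum the expectations, whereas the paper inserts a $2^{j-i}$ factor into the thresholds $a_j$ to get a single uniform tail bound $P(\sup_{w,j}|Z_{\pi_j(w)}-Z_{\pi_{j-1}(w)}|\geq u a_j)\lesssim 2^{-u^2}$ and then integrates, afterwards absorbing the resulting $\sum_j 2^{-j}\sqrt{j-i}$ term into the main sum.
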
 

We now proceed to prove Theorem \ref{theorem:dudley}. Our argument is a modification of the elegant proof of Dudley's inequality from \cite{talagrand1996majorizing}.

We define $\pi_j(w)$ as the projection of $w\in \W$ onto the grid $G_{2^{-j}}$, i.e. the point in $G_{2^{-j}}$ closest to $w$, breaking ties in an arbitrary manner. Observe that by definition and by the triangular inequality, we have
\begin{equation}\label{eq:distance_proj}
d(\pi_j(w),\pi_{j-1}(w)) \leq 2^{-j} + 2^{-(j-1)} \leq 2^{-j+2}
\end{equation}
Our next lemma bounds how much $Z_w$ can vary when we move from the projection $\pi_{j-1}(w)$ to the projection $\pi_{j}(w)$ on a finer grid, uniformly on $w$ and $j$.

 \begin{lemma} \label{lemma:Zdiff} 
  Define 
 \[ M_{2^{-j}} = N_{2^{-j}} N_{2^{-(j-1)}}, \] and set 
 \begin{eqnarray*} a_j &= 4 \cdot 2^{-j} \sqrt{\log(K2^{j-i} M_{2^{-j}})} \end{eqnarray*} 
 Then for all $u \geq 1$,
 \[ P \left( \sup_{w\in \W, j > i} | Z_{\pi_j(w)} - Z_{\pi_{j-1}(w)} | \geq u a_j\right) \lesssim 2^{-u^2},
 \] where we recall that $i$ is the largest integer such that $2^{-i} \geq R.$ 
 \end{lemma}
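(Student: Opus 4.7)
\textbf{Proof plan for Lemma \ref{lemma:Zdiff}.} The plan is a two-level union bound: over the level $j$ and over the finite set of pairs $(\pi_j(w),\pi_{j-1}(w))$ within level $j$. The calibration of $a_j$ is chosen precisely so that after summing over $j$, only a geometric series in $2^{-u^2}$ remains.

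First I would fix $j>i$ and bound the number of distinct ordered pairs $(\pi_j(w),\pi_{j-1}(w))$ as $w$ ranges over $\W$. Since $\pi_j(w)\in G_{2^{-j}}$ and $\pi_{j-1}(w)\in G_{2^{-(j-1)}}$, this number is at most $N_{2^{-j}}N_{2^{-(j-1)}}=M_{2^{-j}}$. By the triangle-inequality estimate \eqref{eq:distance_proj}, each such pair has $d(\pi_j(w),\pi_{j-1}(w))\leq 2^{-j+2}$. Applying Assumption~\ref{ass:increment_concentration} to each pair and taking a union bound gives, for every threshold $t>0$,
\[
P\!\left(\sup_{w\in\W}|Z_{\pi_j(w)}-Z_{\pi_{j-1}(w)}|\geq t\right)\leq 2K\,M_{2^{-j}}\,\exp\!\left(-\frac{t^2}{(2^{-j+2})^2}\right).
\]

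Next I plug in $t=u a_j = u\cdot 4\cdot 2^{-j}\sqrt{\log(K\,2^{j-i}M_{2^{-j}})}$. The exponent becomes $-u^2\log(K\,2^{j-i}M_{2^{-j}})$, so the tail bound simplifies to $2K M_{2^{-j}}\bigl(K\,2^{j-i}M_{2^{-j}}\bigr)^{-u^2}$. For $u\geq 1$ one has $(K M_{2^{-j}})^{1-u^2}\leq 1$ since $K M_{2^{-j}}\geq 1$, so the right-hand side is at most $2\cdot 2^{-(j-i)u^2}$.

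Finally I would take a union bound over $j>i$ and sum the geometric series:
\[
P\!\left(\sup_{w\in\W,\,j>i}|Z_{\pi_j(w)}-Z_{\pi_{j-1}(w)}|\geq u a_j\right)\leq \sum_{j>i} 2\cdot 2^{-(j-i)u^2}=\frac{2\cdot 2^{-u^2}}{1-2^{-u^2}}\leq 4\cdot 2^{-u^2}
\]
for $u\geq 1$ (since then $2^{-u^2}\leq 1/2$), which is the claimed bound up to an absolute constant.

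The only real obstacle is bookkeeping: verifying that the factor $K\,2^{j-i}$ hidden inside $a_j$ is exactly what is needed to absorb both the union bound over the pairs (which costs $M_{2^{-j}}$) and the union bound over levels (which costs a factor that must decay geometrically in $j-i$). Everything else is a direct application of the increment-concentration hypothesis combined with the covering estimate.
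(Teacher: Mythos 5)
Your proposal is correct and follows essentially the same argument as the paper: a union bound over the at most $M_{2^{-j}}$ pairs $(\pi_j(w),\pi_{j-1}(w))$ at each level, the increment-concentration assumption applied with the distance bound $d(\pi_j(w),\pi_{j-1}(w))\leq 2^{-j+2}$, the same calibration of $a_j$ so that the tail becomes $(K2^{j-i}M_{2^{-j}})^{-u^2}$, and a final geometric sum over $j>i$. The only (immaterial) difference is that you absorb the factor $KM_{2^{-j}}$ before summing over $j$, whereas the paper sums first and absorbs afterwards.
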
 

\begin{proof} 
For a given $w\in \W$ and a given $j$, it follows from \eqref{eq:distance_proj} and Assumption \ref{ass:increment_concentration} that for any choice of $a_j$ (not necessarily the one made in the lemma statement)
$$
P \left(  | Z_{\pi_j(w)} - Z_{\pi_{j-1}(w)} | \geq u a_j \right) \leq 2K {\rm exp} \left( -\frac{u^2 a_j^2}{(2^{-j+2})^2}\right). 
$$
We will now take the supremum over $w$. Observe that there are at most $M_{2^{-j}} = N_{2^{-j}} N_{2^{-(j-1)}}$ couples $(\pi_{j-1}(w),\pi_{j}(w))$. Therefore the union bounds implies
$$
P \left( \sup_{w\in \W} | Z_{\pi_j(w)} - Z_{\pi_{j-1}(w)} | \geq u a_j \right) \leq 2K M_{2^{-j}} {\rm exp} \left( -\frac{u^2 a_j^2}{(2^{-j+2})^2}\right). 
$$
We then using again the union bound to take the supremum over $j$,
  \[
P \left( \sup_{j > i} | Z_{\pi_j(w)} - Z_{\pi_{j-1}(w)} | \geq u a_j\right) \leq 2 K\sum_{j>i}    M_{2^{-j}} {\rm exp} \left( -\frac{u^2 a_j^2}{(2^{-j+2})^2} \right)
 \]
For the specific choice of $a_j$ made in that theorem, we have 
\[
P \left( \sup_{j > i} | Z_{\pi_j(w)} - Z_{\pi_{j-1}(w)} | \geq u a_j\right) \leq 
2 K \sum_{j>i}  M_{2^{-j}} (K 2^{j-i} M_{2^{-j}})^{-u^2}.
\] Since $u \geq 1$ and $ K \geq 1$, we can conclude that 
\begin{eqnarray*} 
P \left( \sup_{j > i} | Z_{\pi_j(w)} - Z_{\pi_{j-1}(w)} | \geq u a_j\right) & \leq & 
2 \sum_{j>i} (2^{j-i})^{-u^2} \\ 
& \lesssim & 2^{-u^2} + 2^{-2u^2} +  2^{-3u^2} + \cdots 
 \lesssim  2^{-u^2} 
\end{eqnarray*} 
\end{proof} 

With the above inequality in place, we can prove the concentration result of Theorem \ref{theorem:dudley}. 

\begin{proof}[Proof of Theorem \ref{theorem:dudley}]
Naturally, 
\begin{eqnarray} 
E \sup_{w \in \W} Z_w & = &  E \sup_{w \in \W} (Z_w - Z_{w_0}) + E Z_{w_0} \nonumber \\ 
& \leq & E \sup_{w \in \W} |Z_w - Z_{w_0}| + E Z_{w_0} \label{eq:separate_w0}
\end{eqnarray} We  now focus on bounding the first term of the last equation. We  have that with probability one, 
\[ \lim_{j \rightarrow +\infty} Z_{\pi_j(w)} - Z_w = 0, \] which is formally proved by combining the Borel-Cantelli lemma and equivalence of norms with Assumption \ref{ass:increment_concentration}. We can rewrite this as  
\[ Z_{w} - Z_{w_0} = (Z_{\pi_{i+1}(w)} - Z_{w_0}) + (Z_{\pi_{i+2}(w)} - Z_{\pi_{i+1}(w)}) + \cdots,  \] where, recall, $i$ was chosen to be the largest integer such that $2^{-i} \geq \widehat{R}$. 

\modif{Therefore,} as a consequence of Lemma \ref{lemma:Zdiff}, with the choice of $a_j$ in that lemma, \modif{we have that for every $u\geq 1$}, 
\[ P \left( \sup_{w \in \W} |Z_w -Z_{w_0}| \geq u \sum_{j>i} a_j \right) \lesssim 2^{-u^2}. \] Integrating this:
\begin{eqnarray*} E \left[ \sup_{w \in \W} |Z_{w} - Z_{w_0}| \right] & = & \int_0^{+\infty} P \left( \sup_{w \in \W} |Z_{w} - Z_{w_0}|  \geq x \right) ~ dx \\ 
& = & \modif{\int_{0}^{\sum_{j>i} a_j} P \left( \sup_{w \in \W} |Z_{w} - Z_{w_0}|  \geq x \right) ~ dx  } \\
& +& \sum_{j>i} a_j \int_1^{+\infty} P \left( \sup_{w \in \W} |Z_{w} - Z_{w_0}|  \geq u \sum_{j>i} a_j \right) ~ du \\ 
& \leq & \modif{\sum_{j>i} a_j + } \sum_{j>i} a_j \int_1^{+\infty} 2^{-u^2} ~du \\ 
&  \lesssim &  \sum_{j>i} a_j \\ 
& = & \sum_{j>i} 4 \cdot 2^{-j} \sqrt{\log(K 2^{j-i} M_{2^{-j}})} \\ 
& = & \sum_{j>i} 4 \cdot 2^{-j} \sqrt{\log(K 2^{j-i} N_{2^{-j}} N_{2^{-(j-1)}})} \\ 
& \lesssim & \sum_{j>i}   2^{-j} \sqrt{j-i} + \sum_{j>i}  2^{-j} \sqrt{ \log \left( K N_{2^{-j}} \right)},
\end{eqnarray*} where the last step used the monotonicity of $N_{\epsilon}$ \modif{so that $N_{2^{-j}} \geq N_{2^{-(j-1)}}$, and $\sqrt{a+b}\leq \sqrt{a}+\sqrt{b}$}. We now observe that because $K \geq 1$ and all the $N_{2^{-j}} \geq 1$ with $N_{2^{-(i+1)}} > 1$ by construction, the first term in the sum above is upper bounded by a constant multiple of the second so that 
\[ E \left[ \sup_{w \in \W} |Z_{w} - Z_{w_0}| \right] \lesssim \sum_{j>i} 2^{-j} \sqrt{\log (K N_{2^{-j}})}\] 
Putting this together with Eq. (\ref{eq:separate_w0}) completest the proof of Eq. (\ref{eq:dudley_discrete_form}). Eq. (\ref{eq:dudley_integral_form}) immediately follows by observing that it is an upper bound on Eq. (\ref{eq:dudley_discrete_form}) due to the fact that $N_{\epsilon}$ is nonincreasing in $\epsilon$.  
\end{proof}

\subsection{Application to Gradient Concentration}

Having proven Theorem \ref{theorem:dudley}, we next explain how we will apply it to prove Theorem \ref{thm:smooth}. 
Recall that our goal is to obtain a concentration result for the quantity
\[ \Delta(S,S')  = \sup_{w \in \W}|| \nabla_w  \ell (S, w) - \nabla_w \ell (S', \omega)||.  \]
Now we will find it convenient to define 
\[ \Delta_{w}(S,S') = || \nabla_w  \ell (S, w) - \nabla_w \ell (S', \omega)||,\] so that we can write compactly 
\[ \Delta(S,S') = E \left[ \sup_{w \in \W} \Delta_{w}(S,S') \right],\] 
\modif{and apply Theorem \ref{theorem:dudley} with $\Delta_{w}(S,S')$ playing the roles of the random variables $Z_{w}$.}

\modif{Our first step is to bound the expected value of $Z_w$ for a single value $w$, which appears as the first term in the bounds \eqref{eq:dudley_discrete_form} and \eqref{eq:dudley_integral_form}. }
It will convenient for us to define  
\[ X_i(w) = \nabla f(w;z_i) - \nabla f(w; z_i'),\] where  recall $S = \{z_1, \ldots, z_n\}$ while $S'=\{z_1', \ldots, z_n'\}$; that way we have in turn 
\begin{equation} \label{eq:deltax} \Delta_{w}(S,S') = \frac{1}{n} \left| \left| \sum_{i=1}^n X_i \right| \right|. 
\end{equation} 

\begin{lemma} Under Assumption \ref{assumption:function}, we have that for any $w \in \W$,
\[ E \left[ \Delta_{w}(S,S') \right] \leq \frac{\aoo{5(\sigma^* + LR)} }{\sqrt{n}}.\] \label{lemma:singlewexp}
\end{lemma}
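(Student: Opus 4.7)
The plan is to exploit the identity \eqref{eq:deltax}, which writes $\Delta_w(S,S')$ as a normalized sum $\frac{1}{n}\|\sum_{i=1}^n X_i\|$ of $n$ i.i.d.\ vectors $X_i = \nabla f(w;z_i) - \nabla f(w;z_i')$. These $X_i$ are mean-zero because $z_i$ and $z_i'$ are i.i.d., so the whole problem reduces to bounding the second moment $E\|X_1\|^2$ and then invoking Jensen plus independence.

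First I would apply Jensen's inequality to obtain
$E\|\sum_{i=1}^n X_i\| \leq \bigl(E\|\sum_{i=1}^n X_i\|^2\bigr)^{1/2}$, and then use independence and the zero-mean property of the $X_i$ to cancel all cross-terms and get $E\|\sum_{i=1}^n X_i\|^2 = n\,E\|X_1\|^2$. Once this is done, the lemma boils down to showing $(E\|X_1\|^2)^{1/2} \leq 5(\sigma^* + LR)$.

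To control $E\|X_1\|^2$ the key move is to introduce the reference point $w^*$ (which is the only point at which the variance assumption is available), splitting
\[
X_1 \;=\; \underbrace{[\nabla f(w;z_1) - \nabla f(w^*;z_1)]}_{a_1} \;+\; \underbrace{[\nabla f(w^*;z_1) - \nabla f(w^*;z_1')]}_{b_1} \;+\; \underbrace{[\nabla f(w^*;z_1') - \nabla f(w;z_1')]}_{c_1}
\]
and then applying the $L^2$ (Minkowski) triangle inequality to bound $(E\|X_1\|^2)^{1/2}$ by $(E\|a_1\|^2)^{1/2} + (E\|b_1\|^2)^{1/2} + (E\|c_1\|^2)^{1/2}$. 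The $L$-smoothness of $f(\cdot;z)$ together with $w,w^*\in\W\subset B[0,R]$ gives the deterministic bounds $\|a_1\|,\|c_1\| \leq L\|w-w^*\| \leq 2LR$. For the middle term, $b_1$ is the difference of two i.i.d.\ copies of $\nabla f(w^*;z)$, so $E\|b_1\|^2 = 2\,\mathrm{Var}(\nabla f(w^*;z)) \leq 2(\sigma^*)^2$ by Assumption~\ref{assumption:function}. Collecting these gives $(E\|X_1\|^2)^{1/2} \leq 4LR + \sqrt{2}\,\sigma^* \leq 5(LR+\sigma^*)$, and plugging back into Jensen yields $E[\Delta_w(S,S')] \leq \frac{1}{n}\sqrt{n\cdot E\|X_1\|^2} \leq \frac{5(\sigma^*+LR)}{\sqrt{n}}$.

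There is no serious obstacle here; the proof is essentially Jensen plus independence plus a decomposition trick. The only point worth emphasizing is the decomposition at $w^*$: the variance assumption is only available at the population minimizer, so we need to \emph{transport} the gradient of $f(\cdot;z)$ at $w$ to $w^*$ via the smoothness bound, paying a deterministic $2LR$ per endpoint, and then use the variance bound only on the middle term where both arguments sit at $w^*$.
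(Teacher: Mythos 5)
Your proof is correct and follows essentially the same route as the paper: Jensen's inequality plus independence to reduce to $\sqrt{n\,E\|X_1\|^2}$, then the same three-term decomposition of $X_1$ through the reference point $w^*$, using $L$-smoothness on the two outer terms and the variance assumption on the middle one. The only cosmetic difference is that you combine the three pieces via the $L^2$ Minkowski inequality while the paper uses $\|a+b+c\|^2 \leq 3(\|a\|^2+\|b\|^2+\|c\|^2)$; both land within the stated constant $5(\sigma^*+LR)$.
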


\begin{proof}  By construction,  the random variables $X_i(w), i = 1, \ldots, n$ are i.i.d. and have zero expectation. Thus, using \eqref{eq:deltax}, we have \begin{small} 
\begin{eqnarray*} 
E \left[ \Delta_{w}(S,S') \right] & \leq & \sqrt{ E \left[ \Delta_{w}(S,S')^2 \right]} \\ 
& = & \frac{1}{n} \sqrt{ E \left[ \left| \left| \sum_{i=1}^n X_i(w) \right| \right|^2 \right]} \\ 
&=& \modif{\frac{1}{n} \sqrt{ n E  \left[  \left| \left|  X_1(w) \right| \right|^2 \right]   }}\\
& = & \frac{1}{\sqrt{n}} \sqrt{ E ||\nabla f(w;z_1) - \nabla f(w; z_1')||^2  } \\ 
& = & \aoo{\frac{1}{\sqrt{n}} \sqrt{ E ||\nabla f(w^*;z_1) + \nabla f(w; z_1) - \nabla f(w^*; z_1) - [\nabla f(w^*; z_1') + \nabla f(w; z_1') - \nabla f(w^*; z_1')]||^2  }}  \\ 
& \leq & \aoo{\frac{1}{\sqrt{n}} \sqrt{ 3 E || \nabla f(w^*; z_1) - \nabla f(w^*; z_1')||^2+ 6 L^2 ||w-w^*||^2}} \\ 
& \leq & 
\aoo{\frac{5(\sigma_* + L R)}{\sqrt{n}}}
\end{eqnarray*}  
\end{small}
\end{proof}

\modif{Our second step towards applying Theorem \ref{theorem:dudley} consists in showing how the random variables $Z_w = \Delta_{w}(S,S')$ satisfy Assumption \ref{ass:increment_concentration}.}


\begin{lemma}\label{lem:increment_concentration}
Setting
\[ Z_w = \Delta_{w}(S,S'),\] we have that  Assumption \ref{ass:increment_concentration} holds with $K=d$ and the metric
    \begin{equation}\label{eq:metric_d}
        d(w_1,w_2)=\frac{cL}{\sqrt{n}}\norm{w_1-w_2},
    \end{equation}
for some absolute constant $c$.
\end{lemma}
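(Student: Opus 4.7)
The plan is to reduce the claim to a sub-Gaussian concentration inequality for the norm of a sum of independent, bounded, zero-mean random vectors in $\mathbb{R}^d$.

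First, I would apply the reverse triangle inequality to the definition $Z_w = (1/n) \left|\left| \sum_{i=1}^n X_i(w) \right|\right|$ from \eqref{eq:deltax}, giving
\[
|Z_{w_1} - Z_{w_2}| \leq \frac{1}{n} \left|\left| \sum_{i=1}^n Y_i \right|\right|, \qquad Y_i := X_i(w_1) - X_i(w_2).
\]
Three properties of the $Y_i$ then follow immediately from Assumption \ref{assumption:function}: (i) the $Y_i$ are independent across $i$, since the pairs $(z_i, z_i')$ are drawn independently; (ii) each $Y_i$ has zero mean, since $z_i$ and $z_i'$ are identically distributed and so $E[\nabla f(w; z_i)] = E[\nabla f(w; z_i')]$ for every $w$, giving $E[X_i(w)] = 0$; and (iii) by $L$-smoothness of $f(\cdot; z)$ and the triangle inequality, $||Y_i|| \leq 2L \, ||w_1 - w_2||$ almost surely.

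Next I would bound $(1/n) \left|\left| \sum_i Y_i \right|\right|$ via a Hoeffding-type inequality. The cleanest route is a Hilbert-space Hoeffding-Azuma inequality for sums of independent bounded random vectors (e.g.\ Pinelis's inequality in a 2-smooth Banach space), which yields a dimension-free sub-Gaussian tail of the form
\[
P\!\left( \frac{1}{n} \left|\left| \sum_{i=1}^n Y_i \right|\right| \geq u \right) \leq 2 \exp\!\left( - \frac{n u^2}{c_0 L^2 ||w_1 - w_2||^2} \right)
\]
for an absolute constant $c_0$. Setting $d(w_1, w_2) = (\sqrt{c_0}\, L / \sqrt{n}) \, ||w_1 - w_2||$ then matches Assumption \ref{ass:increment_concentration} exactly, with $K$ an absolute constant; the choice $K = d$ in the lemma statement is a convenient overestimate (valid since $d \geq 1$) that will later combine with the covering-number term in Theorem \ref{theorem:dudley}.

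The main subtlety is obtaining this scaling \emph{without} picking up a spurious $\sqrt{d}$ factor inside the metric. An elementary coordinate-wise Hoeffding bound (each coordinate $Y_{i,k}$ lies in $[-2L||w_1-w_2||, \, 2L||w_1-w_2||]$) combined with a union bound over the $d$ coordinates and the estimate $||v||_2 \leq \sqrt{d}\,||v||_\infty$ also produces a sub-Gaussian tail, but it introduces an extra $\sqrt{d}$ in the metric, which would enlarge the integral in Theorem \ref{theorem:dudley} by a $\sqrt{d}$ factor. The Hilbert-space Hoeffding inequality sidesteps this loss by concentrating the vector norm $||\sum Y_i||$ directly from the inner-product geometry, rather than passing through coordinates, so that all dimension dependence is absorbed into the prefactor. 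Once that concentration is in hand, only bookkeeping of constants remains.
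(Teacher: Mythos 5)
Your proposal is correct, and it follows the same skeleton as the paper's proof — reverse triangle inequality to reduce to $\frac{1}{n}\bigl\|\sum_i Y_i\bigr\|$ with $Y_i = X_i(w_1)-X_i(w_2)$, then zero mean and the almost-sure bound $\|Y_i\|\leq 2L\|w_1-w_2\|$ from smoothness — but it diverges at the key concentration step. The paper invokes Corollary 7 of Jin et al., a vector-martingale concentration bound whose tail carries a factor of $2d/\delta$ inside the logarithm; unwinding that tail is precisely what forces $K=d$ in Assumption \ref{ass:increment_concentration}. You instead use a dimension-free Hoeffding--Pinelis inequality for sums of independent bounded zero-mean vectors in Hilbert space, which yields the sub-Gaussian tail with an absolute-constant prefactor, i.e.\ $K=O(1)$, and you then pad up to $K=d$ only to match the lemma statement. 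Your route is strictly at least as strong: it would let one drop the $\sqrt{\log d}$ contribution inside the Dudley integral entirely (though that term is dominated by the $\sqrt{d}\,\epsilon_0$ term anyway, so the final bound in Lemma \ref{lemma:concentration} is unchanged). Your remark about why the naive coordinate-wise Hoeffding plus union bound would be fatal — it inflates the metric, not the prefactor, and the metric feeds multiplicatively into the entropy integral — correctly identifies the one place where a careless choice of tool would break the proof. The only minor difference in framing is that the paper states the mean-zero condition as a martingale property $E[Y_i \mid Y_1,\dots,Y_{i-1}]=0$ to match the hypothesis of the cited corollary, whereas you use full independence of the $Y_i$; both hold here since the pairs $(z_i,z_i')$ are i.i.d.
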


\begin{proof}
Using the inequality
\[  \left| ||a|| - ||b|| \right|  \leq ||a-b||, \] we first argue that 
\begin{equation}\label{eq:rewriting_ineq_proba}
P \left(  \frac{1}{n} \left| \left|\sum_{i=1}^n X_i(w_1) \right| \right| - \frac{1}{n} \left| \left|\sum_{i=1}^n X_i(w_2) \right| \right| \geq x \right) \leq P \left( \frac{1}{n} \left| \left|\sum_{i=1}^n X_i(w_1) - X_i(w_2) \right| \right| \geq x \right) \end{equation}
Let us now define 
\[ Y_i = X_i(w_1) - X_i(w_2).\] We have that 
\begin{align*}
    ||Y_i|| &= \norm{ \prt{\nabla f(w_1;z_i) - \nabla f(w_1; z_i')} -\prt{\nabla f(w_2;z_i) - \nabla f(w_2; z_i')}}  \\
     &= \norm{ \prt{\nabla f(w_1;z_i) - \nabla f(w_2;z_i) }  - \prt{\nabla f(w_1;z_i') - \nabla f(w_2;z_i') } }\\
     &\leq \norm{ \nabla f(w_1;z_i) - \nabla f(w_2;z_i) }  +  \norm{\nabla f(w_1;z_i') - \nabla f(w_2;z_i') } \\ 
     & \leq U:=2L||w_1-w_2||,
\end{align*} where the final inequality follows from Assumption \ref{assumption:function}. This implies that, conditional on $Y_1, \ldots, Y_{i-1} $, we have that $Y_i$ is \modif{sub-Gaussian} with parameter $U$. 

Since the elements $z_i$ and $z_i'$ are generated i.i.d,  we further have that  
\[ E \left[ Y_i | Y_{i-1}, \ldots, Y_1 \right] = 0. \]
\modif{These observation imply that the assumptions of Corollary 7 from \cite{jin2019short}, which gives a concentration result for sums of \modif{sub-Gaussian} random vectors, are satisfied. Its application implies the existence of}
an absolute constant $c'$ such that with probability at least $1-\delta$, 
\[ 
\left| \left| 
\sum_{i=1}^n Y_i 
\right| \right| \leq c' U \sqrt{n \log \frac{2d}{\delta}},
\]
and, thus, thanks to the definition of $Y_i$ and Eq. \eqref{eq:rewriting_ineq_proba},
$$
P \left(  \frac{1}{n} \left| \left|\sum_{i=1}^n X_i(w_1) \right| \right| - \frac{1}{n} \left| \left|\sum_{i=1}^n X_i(w_2) \right| \right| \geq   c' U \sqrt{ \frac{1}{n} \log \frac{2d}{\delta}} \right) \leq \delta
$$
If we set 
\[ u = c' U \sqrt{\frac{1}{n} \log \frac{2d}{\delta}},\] then 
we have
$$
e^{u^2n/(c'^2 U^2 )} =  2d/\delta
$$
or 
\begin{equation}\label{eq:exp_bound_delta_d}
    \delta = 2d e^{-u^2n/(c'^2 U^2)}
\end{equation} 
so, \modif{remembering $U = 2L||w_1-w_2||$,}
$$
P \left(  \frac{1}{n} \left| \left|\sum_{i=1}^n Z_i(w_1) \right| \right| - \frac{1}{n} \left| \left|\sum_{i=1}^n Z_i(w_2) \right| \right| \geq  u \right) \leq 2d e^{-u^2n/(c^2 U^2)}   = 2d e^{-\frac{u^2}{d(w_1,w_2)^2}} 
$$
with $d(w_1,w_2)$ defined as in \eqref{eq:metric_d} and $c=2c'$. Using Eq. (\ref{eq:deltax}), this implies the statement of the lemma. 
\end{proof}

\modif{Our third step is to obtain a bound on the covering numbers appearing in the integral in \eqref{eq:dudley_integral_form} or the sum \eqref{eq:dudley_discrete_form}.} The following is a standard estimate. 

\begin{lemma}\label{lem:covering_number}
\modif{For the metric $d$ in \eqref{eq:metric_d}, and a set $\W \subseteq B(0,R)$ for the Euclidean unit ball, we have
$$
N( \epsilon, B_{||\cdot||}(0,R), d)\leq \left(1 + \frac{ \epsilon_0}{\epsilon} \right)^d
$$
with 
\[ \epsilon_0 = \frac{2c LR}{\sqrt{n}}.\]  Moreover, the radius $\widehat R$ of $\W$ in the metric $d$ is bounded as $\widehat R\leq \frac{1}{2}\epsilon_0$.}
\end{lemma}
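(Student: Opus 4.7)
The plan is to reduce everything to the standard Euclidean covering-number bound by exploiting the fact that the metric $d$ defined in \eqref{eq:metric_d} is simply a positive scalar rescaling of the Euclidean norm: $d(w_1,w_2) = \tau \|w_1-w_2\|$ with $\tau := cL/\sqrt{n}$. Covering numbers behave trivially under such rescalings, since a finite set $G \subset \mathcal{W}$ is an $\epsilon$-net with respect to $d$ if and only if it is an $(\epsilon/\tau)$-net with respect to $\|\cdot\|$. Hence
\[
N(\epsilon, B(0,R), d) \;=\; N(\epsilon/\tau,\, B(0,R),\, \|\cdot\|).
\]

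Next, I would invoke the classical volume/packing estimate for covering numbers of Euclidean balls, namely $N(\epsilon', B(0,R), \|\cdot\|) \le (1 + 2R/\epsilon')^d$. (The usual proof packs disjoint balls of radius $\epsilon'/2$ inside $B(0, R + \epsilon'/2)$ and compares volumes; I would not re-derive this.) Substituting $\epsilon' = \epsilon/\tau$ and simplifying gives
\[
N(\epsilon, B(0,R), d) \;\le\; \left(1 + \tfrac{2R\tau}{\epsilon}\right)^{\!d} \;=\; \left(1 + \tfrac{\epsilon_0}{\epsilon}\right)^{\!d},
\]
with $\epsilon_0 = 2\tau R = 2cLR/\sqrt{n}$, matching the stated form.

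For the radius bound, since $\mathcal{W} \subseteq B(0,R)$ by hypothesis, every $w \in \mathcal{W}$ satisfies $d(w,0) = \tau\|w\| \le \tau R = \epsilon_0/2$, so the $d$-radius $\widehat{R}$ of $\mathcal{W}$ is at most $\epsilon_0/2$ as claimed. There is no genuine obstacle in this lemma; it is entirely a bookkeeping exercise in how volumetric covering bounds transform under a scalar rescaling of the metric, and the only thing to be careful about is keeping the constant $c$ consistent with the choice made in Lemma \ref{lem:increment_concentration}.
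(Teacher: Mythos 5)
Your proposal is correct and follows essentially the same route as the paper's own proof: both reduce the covering number in the rescaled metric $d(w_1,w_2)=(cL/\sqrt{n})\|w_1-w_2\|$ to the standard volumetric bound $N(\epsilon', B(0,R),\|\cdot\|)\leq (1+2R/\epsilon')^d$ and then substitute $\epsilon'=\epsilon\sqrt{n}/(cL)$, with the radius bound following immediately from the rescaling and $\W\subseteq B(0,R)$. No gaps.
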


\begin{proof}
We use the fact that for any norm $||\cdot||'$ over $\R^d$, \modif{the covering number of its corresponding unit ball is bounded by}
$$
N(\epsilon, B_{||\cdot||'}(0,1),||.||') \leq \prt{1+\frac{2}{\epsilon}}^d.
$$ This is follows from standard volume considerations, see \cite{bartlett2013theoretical}. It implies $$
 N(\epsilon, B_{||\cdot||'}(0,R),||.||') \leq \prt{1+\frac{2R}{\epsilon}}^d
$$
However, note that $R$ is a bound on the radius of $\W$ in the ordinary Eucliden norm, not in the $d(\cdot, \cdot)$ so that this bound cannot be applied directly. Rather, we observe that since $d(w_1,w_2) = \frac{cL}{\sqrt{n}}||w_1-w_2||$, we have 
$$N(\epsilon, B_{||\cdot||}(0,R), d) \leq N \left(\frac{\sqrt{n}} {cL}\epsilon, B_{||\cdot||}(0,R),||.|| \right),
$$ 
Therefore, we have
\begin{equation}\label{eq:topological_entropy_our_metric}
N \left(\epsilon, B_{||\cdot||} (0,R),d \right)  \leq \prt{1+\frac{2R}{\frac{\sqrt{n}} {cL}\epsilon}}^d 
= \prt{1+\frac{2cRL}{ \epsilon\sqrt{n}}}^d 
\end{equation} 
\modif{from which the first part of the result follows. The second part is an immediate consequence of $d(w_1,w_2) = \frac{cL}{\sqrt{n}}||w_1-w_2||$ and $\W \subseteq B(0,R)$.}
\end{proof}

\modif{All the pieces are now ready to prove Lemma \ref{lemma:concentration} using Theorem \ref{theorem:dudley}, which concludes the proof of our main result, Theorem \ref{thm:smooth}.}

\medskip
\begin{proof}[Proof of Lemma \ref{lemma:concentration}]
In the notation we introduced earlier, 
\begin{eqnarray*} E \left[ \Delta(S,S') \right] & = & E \left[ \sup_{w \in \W} \Delta_{w}(S,S') \right] \\ 
& = & E \left[ \sup_{w \in \W} Z_w \right]
\end{eqnarray*} 
\modif{Lemma \ref{lem:increment_concentration} shows that Assumption \ref{ass:increment_concentration} is satisfied with the constant $K$ equal the dimension $d$, and the metric $d(w_1,w_2)$ defined as $d(w_1,w_2)=(cL/\sqrt{n}) ||w_1 - w_2||$.}
\modif{Theorem \ref{theorem:dudley} therefore implies}
\begin{equation} \label{eq:firstwbound} E \left[ \sup_{w \in \W} Z_w \right] \lessim E Z_{w_0} + \int_0^{\widehat{R}} \sqrt{\log \left( d N_{\epsilon} \right)} ~d \epsilon. 
\end{equation} 
\modif{Using Lemma \ref{lem:covering_number} to bound $N_\epsilon$ and $\widehat{R}$ then yields}
\begin{equation} \label{eq:secondwbound} E \left[ \Delta(S,S') \right] \lessim E Z_{w_0} + \int_0^{\epsilon_0/2} \sqrt{\log \left( d  \left( 1 + \frac{\epsilon_0}{\epsilon} \right)^d \right)} ~d \epsilon. 
\end{equation}

We next bound the second term on the right-hand side of this equation. Using $\sqrt{a+b}\leq \sqrt{a}+\sqrt{b}$, we have 
\begin{align*}
    \int_0^{\epsilon_0/2} \sqrt{\log \left( d  \left( 1 + \frac{\epsilon_0}{\epsilon} \right)^d \right)} ~d \epsilon & \leq \int_{0}^{\epsilon_0/2}\prt{\sqrt{\log d}+\sqrt{d \log \prt{1+\frac{\epsilon_0}{\epsilon}}}}d\epsilon\\
    & = \frac{\epsilon_0}{2} \sqrt{\log d} + \sqrt{d} \int_{0}^{\epsilon_0/2} \sqrt{ \log \prt{1+\frac{\epsilon_0}{\epsilon}}}d\epsilon\\
    & = \frac{\epsilon_0}{2} \sqrt{\log d}  + \epsilon_0 \sqrt{d}\int_{0}^{1/2}  \sqrt{ \log \prt{1+\frac{1}{x}}}dx \\
   & \lesssim \epsilon_0\sqrt{d}  \\ &   \lesssim  LR \sqrt{\frac{d}{n}} \label{eq:bound_integral},
\end{align*} 
\modif{where we have used the well-definedness of the integral in the third line.} Plugging this into Eq. (\ref{eq:secondwbound}) and using Lemma \ref{lemma:singlewexp} to bound the first term of that equation \modif{completes the proof as}
\[ E \left[ \Delta(S,S') \right] \lesssim \frac{\aoo{\sigma^* + LR}}{\sqrt{n}} + \frac{LR\sqrt{d}}{\sqrt{n}}.\] 
\end{proof}

\section{Inevitability of Dimension Dependence\label{sec:inevitability}}


We now show that some kind of dimension dependence in generalization is inevitable under our assumptions. In particular, we will show that holding $n$ (size of dataset) fixed and sending $d$ (dimension of \modif{the parameter space}) to infinity while keeping the various Lipschitz constants bounded results in a generalization error that is a constant independent of $n$. This rules out the possibility of getting a \modif{uniform} bound that scales as $1/\sqrt{n}$ without any dependence on $d$. 


We will need to tweak our notation slightly \modif{in order to introduce a dimension dependency}. 
\modif{We will work with families of loss functions} $f_{d}(w;z)$ and distributions ${\cal D}_d$, and it will be convenient for our argument to take distributions over $\mathbb{R}^d$. We consider the generalization error at optimality, defined as: 
\[ {\rm G} (f_d, {\cal D}_d) = E_{z \sim {\cal D}_d,\modif{S_n\sim {\cal D}_d }} [f(w_n^*(S_n);z) - \ell (w_n^*(S_n),S_n)],\] 
where 
\[ w_{n}^*(S_n) \in \arg \min_{w \in {\cal W}} \ell(w, S_n), \] where every element of $S_n$ is drawn i.i.d. from ${\cal D}_d$. When the optimizer $w_n^*(S_n)$ is not unique, we modify the definition above to set $G(f_d, {\cal D}_d)$ is defined as the worst generalization error over all possible $w_n^*(S_n)$. Note that the  difference from our earlier definition of generalization is that we evaluate it at $w_n^*(S_n)$, the optimal solution for the sample $S_n$, rather than the iterate $w_t(S_n)$. 

We then have the following proposition. 

\begin{proposition} \label{prop:lower} There exist a family of loss functions $f_d(w;z)$ and  distributions ${\cal D}_d$ such that for any fixed $n$, we have that $w_n^*(S_n)$ is \modif{unique a.a.s. with $d$}, and  
\[ \lim_{d \rightarrow +\infty} {\rm G} (f_d, {\cal D}_d) \geq \frac{1}{8}. \]
Moreover $\sup_{w \in {\cal W}} ||\nabla f_d(w;z)||  \leq 2$ and $f_d(w;z)$ is $1$-smooth for every $d$ and $z$. 
\end{proposition}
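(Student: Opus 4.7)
I would exhibit the family explicitly. Take $\W_d = [0,1]^d$, let $\mathcal{D}_d$ be uniform on the canonical basis $\{e_1, \ldots, e_d\}$ of $\R^d$, and set
\[
f_d(w; z) \;=\; \tfrac{1}{4}(1 - z^T w)^2 \;+\; \tfrac{\lambda_d}{2}\|w\|^2, \qquad \lambda_d = 1/d.
\]
The quadratic ``one-hot'' loss drives the overfitting, while the small regularizer exists only to make the empirical minimizer unique. My first step would be to verify the regularity: for $z = e_i$, $\nabla_w f_d(w;e_i) = -\tfrac{1}{2}(1-w_i)e_i + \lambda_d w$, which on $\W_d$ has norm at most $\tfrac{1}{2}+ \lambda_d \sqrt{d} \le \tfrac{3}{2} < 2$, and the Hessian is diagonal with largest entry $\tfrac{1}{2}+\lambda_d \le 1$ for $d \ge 2$, so $f_d$ is $1$-smooth.

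Next I would identify the empirical minimizer. Writing $k_i(S_n)$ for the number of samples $z_j$ equal to $e_i$, the empirical loss decomposes as
\[ \ell(w;S_n) = \sum_i \tfrac{k_i}{4n}(1-w_i)^2 + \tfrac{\lambda_d}{2}\|w\|^2, \]
which is strongly convex with parameter $\lambda_d>0$ and hence has a unique minimizer over $\W_d$ for every $d \ge 2$ (in fact a.s., not merely a.a.s.). The coordinate-wise first-order condition yields
\[
 w_i^*(S_n) \;=\; \frac{k_i/(2n)}{\lambda_d + k_i/(2n)} \;\in\; [0,1),
\]
so the unconstrained minimizer already lies in the interior of $\W_d$ and the box constraint is inactive.

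The core computation is the asymptotic behavior as $d\to\infty$ with $n$ fixed. Let $I(S_n) = \{i : k_i > 0\}$, so $|I| \le n$ almost surely. Plugging the formula for $w^*$ into the population risk gives
\[
E_{z \sim \mathcal{D}_d}[f_d(w^*;z)] = \frac{d - |I|}{4d} \;+\; \frac{1}{4d}\sum_{i \in I}\!\frac{\lambda_d^2}{(\lambda_d + k_i/(2n))^2} \;+\; \frac{\lambda_d}{2}\|w^*\|^2,
\]
which tends to $\tfrac{1}{4}$: the first term $\to 1/4$ because $|I|/d \to 0$, the second is bounded by $|I|/(4d)\to 0$, and the third by $\lambda_d n/2 \to 0$. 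An analogous calculation (every summand carries an extra factor $\lambda_d$ or $\lambda_d^2$) shows $\ell(w^*;S_n) \to 0$. Both quantities are uniformly bounded in $S_n$ and $d$, so dominated convergence lets me push the limit inside the expectation over $S_n$, yielding $\lim_{d\to\infty} G(f_d,\mathcal{D}_d) = \tfrac{1}{4} \ge \tfrac{1}{8}$.

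The main obstacle is a balancing act rather than a genuinely hard step: $\lambda_d$ must be strictly positive for uniqueness of $w^*$, small enough that the gradient and smoothness bounds hold with the required constants (forcing $\lambda_d\sqrt{d} \lesssim 1$ and $\lambda_d \le 1/2$), and yet vanishing fast enough that $w_i^*\to 1$ on each observed coordinate so the empirical loss disappears in the limit. The choice $\lambda_d = 1/d$ accomplishes all three requirements simultaneously.
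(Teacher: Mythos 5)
Your computations check out: the regularity bounds, the closed form $w_i^*=\frac{k_i/(2n)}{\lambda_d+k_i/(2n)}$, the deterministic uniqueness from $\lambda_d$-strong convexity, and the limits $E_z f_d(w^*;z)\to \tfrac14$, $\ell(w^*;S_n)\to 0$ are all correct, and your construction (one-hot quadratic loss with a vanishing ridge on the cube) is genuinely different from the paper's, which works on the Euclidean unit ball with the one-sided loss $f(w;z)=\sum_k \tfrac12 (z^k w^k)_+^2$, $z^k=\pm1$ i.i.d., and restores uniqueness by adding a vanishing linear tilt $\epsilon z^T w$ rather than a quadratic regularizer.

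There is, however, a genuine gap relative to what the proposition is for. You take $\W_d=[0,1]^d$, whose Euclidean radius is $\widehat R=\sqrt{d}$. The proposition is meant to show that dimension dependence is unavoidable \emph{while the problem parameters of Assumption 1 --- the smoothness constant, the gradient bound, and the radius $R$ of the ball containing $\W$ --- stay bounded}; this is what lets it ``rule out the possibility of getting a uniform bound that scales as $1/\sqrt{n}$ without any dependence on $d$,'' since the paper's own upper bound in Theorem \ref{thm:smooth} carries an explicit $LR^2$. In your family one has $LR^2=d$, so your example is entirely consistent with a hypothetical dimension-free bound of the form $C\,LR^2/\sqrt{n}$ (or indeed $\bar\epsilon_{\rm opt}+C(\sigma^*R+LR^2)/\sqrt{n}$): the divergence you exhibit can be attributed to the growth of the domain rather than to the dimension per se. The fix is not cosmetic --- rescaling your cube to $[0,1/\sqrt{d}]^d$ to fit in the unit ball destroys the construction, because the population loss $\tfrac14(1-w_i)^2$ no longer drops below a constant on observed coordinates. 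The paper avoids this by keeping $\W=B(0,1)$ and letting the minimizer spread mass $1/\sqrt{m}$ over the $m$ unanimous coordinates of the sample, so that the sample loss is exactly zero while a fresh $z$ still pays $\Theta(1)$ in expectation; you would need an analogous mechanism that operates inside a ball of dimension-independent radius.
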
 

Although this proposition provides an impossibility result for fully dimension-independent bounds at an optimal solution, it can also rule out the possibility of providing \modif{uniform} bounds on SGD iterates such as those we obtained in this paper, \modif{or any algorithm that converges to the minimizer.}  Indeed, given any bound on the generalization error of $w_t$, we can consider what happens to that bound when we set $t \rightarrow \infty$. Given that the minimizer described in the proposition is unique with \modif{high probability}, we will recover the generalization bound on the minimizer. In particular, this rules out the possibility of getting uniform generalization bounds on $w_t$ that scale as $1/\sqrt{n} + 1/\sqrt{T}$ and do not have any dependence on dimension. 

{\aoo The proof of this proposition is quite similar to arguments made in \cite{shalev2009stochastic} and \cite{amir2021sgd}.} 

\bigskip

\begin{proof}[Proof of Proposition \ref{prop:lower}] For convenience, we will write $w_n^*$ instead of $w_n^*(S_n)$, and likewise we will drop the subscript $d$ on $f_d, {\cal D}_d$. We first give a proof of this theorem without the stipulation that $w_n^*$ is unique, and then proceed to sketch how to modify it to ensure uniqueness.
\begin{figure}
  \centering
  \begin{subfigure}{0.45\textwidth}  
    \includegraphics[width=\linewidth]{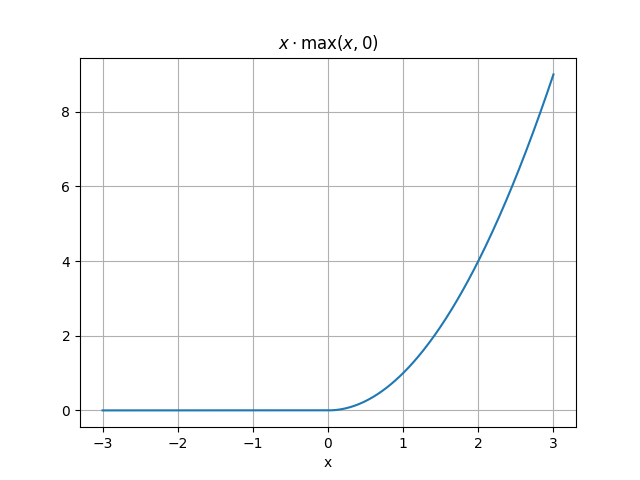}
    \caption{The function $x \max(x,0)$.}
    \label{fig:plot1}
  \end{subfigure}
  \hfill  
  \begin{subfigure}{0.45\textwidth}  
    \includegraphics[width=\linewidth]{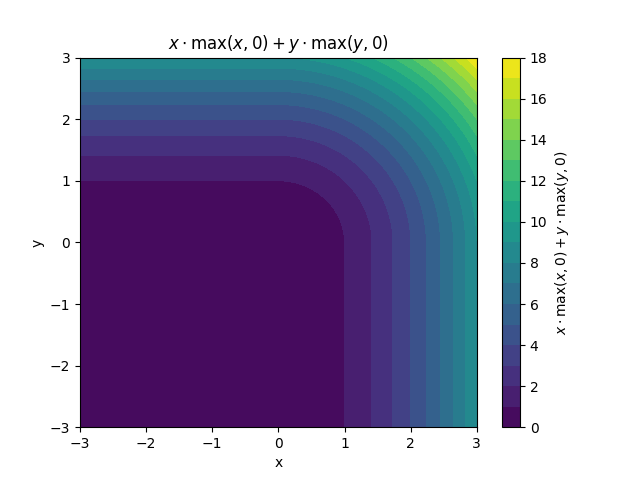}
    \caption{Level sets of $x \max(x,0) + y \max(y,0)$.}
    \label{fig:plot2}
  \end{subfigure}
  \caption{Loss function used in our counterexample.}
  \label{fig:both_plots}
\end{figure}
We define (a family of) loss functions:
\begin{equation}\label{eq:second_loss}
f(w;z) = \sum_{k=1}^d   \frac{1}{2} (z^k w^k)^2_+,
\end{equation} where $w^k,z^k$ refer to the $k$'th coordinate of $w,z$ respectively, and 
$$(x)_+^2  = x\max(x,0).$$ In words, as long as the $k$'th coordinate of $w^k$ and $z^k$ have opposite signs, the loss incurred in the $k$'th term is zero, and otherwise proportional to their product. A plot of this loss function can be found in the figures above. The distribution ${\cal D}_d$ is such that every entry of $z^k$ is chosen to be $\pm 1$ independently. Finally, we define ${\cal W}$ to be the ball of radius $1$ around the origin. 

Let us observe that $\nabla f(w;z)$ is a vector whose $k$'th entry is either $w^k$ or zero. Indeed, if $w^k$ and $z^k$ have the opposite sign, it is zero; and otherwise, it is $(z^k)^2 \modif{w^k = w^k}$, using the fact that $z^k = \pm 1$.  
Consequently $\sup_{w \in {\cal W}} ||\nabla f(w;z)|| = 1$. From the same observation, we see that $\nabla f(w;z)$ is $1$-Lipschitz.

Let $I_{+}$ be the event that there exists a coordinate $k_0$ such that every vector in $S_n = \{z_1, \ldots, z_n\}$ has the $k$'th coordinate equal to $+1$. Likewise, let $I_{-1}$ be the event that there exists a coordinate such that every vector in $S_n$ has that coordinate equal to $-1$. Clearly, 
\[ \lim_{d \rightarrow \infty} P(I_{+} \cap I_{-}) = 1.\]

Now let us define $I = I_{+} \cap I_{-}$ and let us condition on the event $I$. In that case, let $m$ be the number of coordinates where every entry of the vectors $z_i$ is the same (either $+1$ or $-1$). Without loss of generality, suppose the first $\ell$ coordinates have every entry of $z_i$ equal to $+1$ while the next $m-l$ coordinates have every entry of $z_i = -1$. In that case, we observe that 
\begin{equation} \label{eq:firstw*} w_n^* = \frac{1}{\sqrt{m}}(\underbrace{-1, \ldots, -1}_{l}, \underbrace{1, \ldots, 1}_{m-l}, 0, \ldots, 0)\end{equation} 
is a minimizer of $\ell(w;S_n)$ over ${\cal W}$ achieving a loss of zero \modif{since $w^k z^k_i \leq 0$ for every $k,i$ (though clearly not the only minimizer since the zero vector also achieves a cost of zero).} 
We then have that on a new point $z$,  
\begin{align*}
f(w_n^*;z) &= \frac{1}{2}\sum_{k=1}^d (z^k w^k)^2_+\\
&\geq \frac{1}{2}\sum_{k=1}^l \left( -z^k \frac{1}{\sqrt{m}}  \right)^2_+ + \frac{1}{2}\sum_{k=l}^m \left( z^k \frac{1}{\sqrt{m}}  \right)^2_+ \\
&= \frac{1}{2m} \sum_{k=1}^l (-z^k)_+^2 +  \frac{1}{2m} \sum_{k=l}^m (z^k)_+^2,
\end{align*} so that, taking expectations over the distribution of the new point, 
\begin{equation} \label{eq:genlowerbound}  E_{z \in {\cal D}_d} f(w_n^*;z) \geq \frac{1}{2m} \left( \sum_{k=1}^l \frac{1}{2} + \sum_{k=l}^m \frac{1}{2} \right) = \frac{1}{4}. 
\end{equation} 
\modif{As this is true for any realizations $S_n$ satisfying $I$, whose probability converges to 1 when $d$ grows, the generalization error remains larger than 1/8 for large $d$.}
The proof is not over, however, because $w_n^*$ is not the unique minimizer. 
\modif{Hence, we now discuss how to modify the argument to ensure the uniqueness with high probability.} 


We achieve this by adding a linear term to the loss function
\begin{equation}\label{eq:second_loss2}
f_{\epsilon}(w;z) = \epsilon z^Tw+\frac{1}{2}\sum_{k=1}^d (z^k w^k)^2_+ = \sum_{k=1}^d  \left(\epsilon z^kw^k+   \frac{1}{2} (z^k w^k)^2_+\right),
\end{equation} for some $\epsilon>0 $. Note that $\nabla f(w;z)$ is still $1$-Lipschitz and $\lim_{\epsilon \rightarrow 0} \sup_{w \in {\cal W}} ||\nabla f(w;z)|| = 1$.

We fix again realization $S_n$ satisfying event $I$. We use $w_{n, {\rm prev}}$ to refer to $w_n^*$ from Eq. (\ref{eq:firstw*}), and $w_{n, \epsilon}^*$ to refer to the minimizer of the modified empirical loss $\ell_\epsilon$ based on $f_{\epsilon}(w;z)$ as a function of $\epsilon$, which we will show below is unique for any $\epsilon >0$. We will further show that when $\epsilon$ approaches zero, 
\begin{equation} \label{eq:limit} 
\lim_{\epsilon \rightarrow 0} w_{n, \epsilon}^* = w_{n, {\rm prev}},
\end{equation}
and that $E_{z \in {\cal D}_d} f(w_{n, \epsilon}^*;z)$ approaches $1/4$. 
These convergences are uniform over all realizations $S_n$ satisfying $I$ since the number of these is finite. Together with the boundedness of $\ell_\epsilon$ and $\lim_{d\to \infty} P(I)=1 $, this implies that for any large enough $d$, one can take a sufficiently small $\epsilon_d$ for which the generalization error, obtained by further taking expectation over the realizations $S_n$, is larger than $1/8$. At the same time, the minimizer will be unique a.a.s. since we will show below that it is unique conditional on $I$ and of course  $\lim_{d\to \infty} P(I)=1 $.



We now turn to the proof of Eq. (\ref{eq:limit}) as well as of the uniqueness of $w_n^*$. Since we assume $I$ holds, we suppose that dimensions were sorted as above, i.e. all $z_i^k=1$ for $k=1,\dots,l$ and all $z_i^k=-1$ for $k=l+1,\dots,m$.
Observe first that for any of the first $l$ dimensions, 
\begin{equation}\label{eq:partial_deriv}
    \frac{\partial\ell_\epsilon}{\partial w^k}(w;z) =\frac{1}{n}\sum_{i=1}^n  ( \epsilon + \max(0,w^k)) = \epsilon  + \max(0,w^k)> 0,
\end{equation}
(with opposite result for $k=l+1,\dots,m$).
Hence the gradient is never zero, which directly implies that any minimum $w_{n, \epsilon}^*$ lies on the boundary of $\W$, i.e. $||w_{n, \epsilon}^*||=1$. Moreover the minimum is unique because, if there was another minimum, any convex combinations of these two points would also be a minimum by convexity of $\ell_\epsilon(.;z)$, while being in the interior of the unit ball $\W$, which we have just seen is impossible.
The positivity of the partial derivative for $k=1,\dots,l$ further implies that $(w_{n, \epsilon}^*)^k <0$ (where the superscript does not refer to a power, but rather to the $k$'th entry of the vector). All these partial derivatives are equal to $\epsilon$, implying the equality of all $(w_{n, \epsilon}^*)^k$ to some constant $-c_\epsilon$ for $k=1,\dots,l$, since the gradient must be proportional to $w_{n, \epsilon}^*$ by KKT conditions. A parallel argument shows that $(w_{n, \epsilon}^*)^k= c_\epsilon$ for $k=l+1,\dots,m$, i.e. when all $z_i^k$ are -1.

Let us now move to the other dimensions, for which some $z_i^k$ in the dataset are +1 and some others -1, and argue that $\lim_{\epsilon \rightarrow 0} (w_{n, \epsilon}^*)^k = 0$, still for the same fixed realization. Indeed, increasing the $k$'th coordinate above $O(\epsilon)$ or below $-O(\epsilon)$ for such dimension would only increase $f_{\epsilon}$ while making the norm of the vector larger.

These observations allow showing the convergence in Eq. (\ref{eq:limit}): indeed, all coordinates outside of $\{1,\ldots,m\}$ approach zero, while coordinates in $1, \ldots, l$ are always equal to some $-c_\epsilon$ and those in $\{l+1, \ldots, m\}$ to $c_\epsilon$. The fact that the minimizer lies on the boundary of the unit ball $\W$ implies then $c_\epsilon \to 1/\sqrt{m}$.

\end{proof}

\section{Conclusion} We have proved a bound on the generalization error associated with convex gradient descent without assuming strong convexity. We have shown that as $T \rightarrow \infty$, this generalization error saturates at something that scales as $\sim 1/\sqrt{n}$, treating the various Lipschitz parameters, bounds, and dimension as constants. In particular, the implication of this is that early stopping is not required; in contrast, all previous work under these assumptions used early stopping.  The core technical difference between this work and the previous literature is that we do not use arguments based on algorithmic stability when a {\em single} point is removed from the dataset. 

Our bound has an explicit factor of $\sqrt{d}$ in it, but as we have seen, some kind of scaling with dimension is unavoidable under our assumption. An open question is whether one can prove a lower bound on generalization error of SGD that elucidates on scaling with dimension. 

\section{Acknowledgements} The work of J.H. is supported by the Incentive Grant for Scientific Research (MIS) ``Learning from Pairwise Data'' and by the KORNET project from F.R.S.-FNRS and by the ``RevealFlight'' ARC from the fédération Wallonie-Bruxelles. The work A.O. was supported by NSF awards 2245059 and 1914792.


\vskip 0.2in
\bibliography{sample}

\end{document}